\documentclass[11pt]{article}

\usepackage[margin=1in]{geometry}
\usepackage{times}
\usepackage{latexsym}
\usepackage{amsmath,amssymb,amsthm}
\usepackage{graphicx}
\usepackage{natbib}
\usepackage{hyperref}
\usepackage{xcolor}

\usepackage{amsmath,amsfonts,bm}

\def\eqref#1{equation~\ref{#1}}

\def\1{\bm{1}}

\DeclareMathAlphabet{\mathsfit}{\encodingdefault}{\sfdefault}{m}{sl}
\SetMathAlphabet{\mathsfit}{bold}{\encodingdefault}{\sfdefault}{bx}{n}

\usepackage{mathrsfs}
\usepackage{hyperref}
\usepackage{url}
\usepackage{booktabs}       
\usepackage{amsfonts}       
\usepackage{nicefrac}       
\usepackage{microtype}      
\usepackage{xcolor}         
\usepackage{latexsym}
\usepackage{makecell}

 \usepackage{multirow}
 \usepackage{fancybox}
 \usepackage{dirtytalk}
\usepackage{xcolor}
\usepackage{graphicx}
\usepackage{amsmath}
\usepackage{amssymb}
\usepackage{lmodern}
\usepackage{subcaption} 
\usepackage{calrsfs} 
\usepackage{mathrsfs}
\usepackage{textcomp}

\newtheorem{lem}{Lemma}[section]

\newtheorem{prop}[lem]{Proposition}
\newtheorem{thm}[lem]{Theorem}
\newtheorem{coro}[lem]{Corollary}

\newtheorem{de}[lem]{Definition}
\newtheorem{rem}[lem]{Remark}

\newtheorem{ex}[lem]{Example}

\title{Symmetrization of Loss Functions for Robust Training of Neural Networks in the Presence of Noisy Labels}

\author{
Alexandre Lemire Paquin \quad Brahim Chaib-Draa \quad Philippe Gigu\`ere\\
Department of Computer Science and Software Engineering\\
Université Laval, Québec, Canada\\
\texttt{alexandre.lemire-paquin.1@ulaval.ca}\\
\texttt{brahim.chaib-draa@ift.ulaval.ca}\\
\texttt{philippe.giguere@ift.ulaval.ca}
}
\date{June 2026}

\begin{document}

\maketitle

\begin{abstract}
Labeling a training set is not only often expensive but also susceptible to errors. Consequently, the development of robust loss functions to handle label noise has emerged as a problem of great importance. The symmetry condition provides theoretical guarantees for robustness to such noise. In this work, we investigate a symmetrization method that arises from the unique decomposition of any multi-class loss function into a sum of a symmetric loss function and a class-insensitive term. Notably, the special case of symmetrizing the cross-entropy loss leads to a multi-class extension of the unhinged loss function. This loss function is linear, but unlike in the binary case, it must have specific coefficients in order to satisfy the symmetry condition. Under appropriate assumptions, we demonstrate that this multi-class unhinged loss function is the unique convex multi-class symmetric loss function. It holds a significant role among multi-class symmetric loss functions since the linear approximation of any symmetric loss function around points with equal components must be equivalent to the multi-class unhinged. Furthermore, we introduce SGCE and $\alpha$-MAE, two novel loss functions that smoothly transition between the multi-class unhinged loss and the Mean Absolute Error (MAE). Our experiments demonstrate competitive performance compared to previous state-of-the-art robust loss functions on standard benchmarks, highlighting the effectiveness of our approach in handling label noise.
\end{abstract}
\section{Introduction}

In recent years, deep learning has made significant advancements, achieving state-of-the-art performance in various domains such as computer vision and natural language processing \citep{DBLP:journals/nature/LeCunBH15}. However, these deep learning models often require extensive training on large datasets. Acquiring correct labels for such datasets can be costly. To mitigate this problem, crowdsourcing platforms have been employed, but they come with the drawback of potentially introducing high amount of errors into the labels. \citet{DBLP:conf/iclr/ZhangBHRV17} tried to fit random labels on CIFAR10 and ImageNet with different deep neural network architectures. They came to the conclusion that deep networks can easily fit random labels during training and that their effective capacity is sufficient for memorizing the entire data set. This memorization ability can become particularly problematic in the presence of label noise, as the network may learn to fit the noise rather than the true underlying patterns. In order to address this problem, robust loss functions based on the symmetry condition have been proposed \citep{Ghoshneuro,Ghosh2017}. A loss function $L(z,y)$, where $z$ is a score vector for the $C$ classes and $y$ is the label, is symmetric if the quantity $\sum_{k=1}^C L(z,k)$ remains constant regardless of $z$. Additional explanations are provided in Appendix \ref{uni}. Under the symmetry condition, the optimizers of the expected loss on the clean distribution are the same as the optimizers of the expected loss on the corrupted distribution with uniform label noise. Noise robustness results for some types of non uniform noise can also be obtained \citep{Ghoshneuro, Ghosh2017}. The study and the design of new symmetric loss functions is therefore of great practical importance.

This work proposes a principled symmetrization method for multi-class loss functions leading to a general method for producing symmetric loss functions from non-symmetric loss functions. Our main contributions are the following:

\begin{enumerate}
    
    \item [i)] We apply the proposed symmetrization method to different loss functions (section \ref{symmetrizationLoss}). Notably, the symmetrization of the cross-entropy loss (CE) leads to a multi-class extension of the unhinged loss function \citep{DBLP:conf/nips/RooyenMW15,shoham2021exploration,Zhou2023unhinged}. Moreover, the symmetrization of the generalized cross-entropy (GCE) \citep{Zhang2018} gives rise to a loss function that smoothly transitions between the multi-class unhinged loss and the mean absolute error (MAE). We refer to this loss as SGCE. 
    \item[ii)] We show the following results about the multi-class unhinged loss function (section \ref{symdiff}):\begin{enumerate}
        \item [a)] It is the unique convex, non-trivial, non-increasing, multi-class symmetric loss function under the assumption of invariance to permutations (defined in section \ref{assumptions}). 
        \item[b)] It is the linear approximation of any symmetric loss function satisfying invariance to permutations at points with equal components (section \ref{linapprox}).
     
    \end{enumerate}
    \item[iii)] We introduce $\alpha$-MAE a loss function  combining the unhinged loss with the MAE where the parameter $\alpha$ directly controls the $\beta$-smoothness of the loss (section \ref{comb}).
\end{enumerate}
Experiments comparing SGCE and $\alpha$-MAE with previously proposed robust loss functions on symmetric, asymmetric and natural noise show promising results for our approach (section \ref{exp}). The proofs of all theoretical results are provided in Appendix \ref{sec:proofs}.
\section{Related work}
\citet{Natarajan2013LearningWN} proposed a loss correction approach in the binary case which was extended to the multi-class case in \citet{Patrini2016MakingDN} by estimating a noise transition matrix. 
In order to facilitate the estimation of this transition matrix, \citet{Yao2020} considered a factorization of the matrix in two easier to estimate matrices. For its part, \citet{pmlr-v139-li21l} estimated the transition matrix and learned the classifier simultaneously (end-to-end). In the terminology of \citet{DBLP:journals/kbs/AlganU21}, the different approaches above belong to the family of noise model based methods since they try to estimate the noise structure directly. On the other hand, noise model free methods mainly try to design intrinsically robust loss functions or to exploit different forms of regularization. Our work is concerned with the problem of designing new robust loss functions (through a process of symmetrization of a loss function) and so it belongs to the family of noise model free methods. An advantage of such methods is that they can be computationally less costly since they do not require to estimate the noise model.

\citet{Ghoshneuro} introduced the symmetry condition in the binary case and showed that it is a sufficient condition to make risk minimization robust to label noise. The sigmoid loss, the ramp loss and the probit loss satisfy this condition but the common convex loss functions do not. It turns out that the only binary convex loss function to be symmetric is the
unhinged loss \citep{DBLP:conf/nips/RooyenMW15}. While the sigmoid loss, the ramp loss and the probit loss achieve robustness by reducing the impact of wrongly classified examples during training (more likely to be corrupted), the unhinged loss achieves robustness by increasing the impact of correctly classified examples during training (more likely to be clean) by being negatively unbounded.

\citet{Ghosh2017} proved label noise robustness results for multi-class symmetric loss functions. Under the symmetry condition, any minimizer of the risk on the corrupted distribution with uniform label noise is also a minimizer of the risk on the clean distribution. Noise tolerance under simple non uniform noise and class conditional noise are also obtained but under some more assumptions (the true risk for the optimal classifier must be $0$). They propose the Mean absolute error (MAE) as a robust loss function for training neural networks. Their experimental results show that while the cross-entropy loss eventually severely fits the noise, the MAE is much more robust. However, the test performance on clean data is often better for the cross-entropy loss. The MAE loss is seen to be more prone to underfitting and to be slower to train because the gradient can saturate while training. 

In order to exploit both the noise robustness of MAE and the speed of training of CE, \citet{Zhang2018} proposes a generalization of the two losses. Their loss function is the negative Box-Cox transformation and it allows to interpolate between the MAE and the CE with a hyperparameter. The symmetric cross-entropy loss is proposed in \citet{Wang2019SymmetricCE}. This loss function is composed of two terms. A robust term (reverse cross-entropy loss) and the standard cross-entropy loss (for convergence). The reverse cross-entropy loss is a scalar multiple of the MAE. Taylor cross entropy loss \citep{Feng2021} considers approximations to the cross-entropy loss of different orders. The MAE is the first order Taylor approximation of CE. The second-order Taylor Series approximation of CE is an average combination of MAE and a lower bound of Mean Squared Error (MSE). Order 2 and above approximations of the CE are not symmetric loss functions. Considering different orders of approximation to the CE is another way to interpolate between the MAE and the CE. These different methods are therefore all trying to make a compromise between the robustness of the MAE and the better fitting ability of CE. Our method is always guaranteed to lead to a symmetric loss function (contrary to the approaches above) and the problem of underfitting can be alleviated by controlling the amount of saturation in the loss ($\beta$-smoothness of the loss).
 
In the special case of the cross-entropy loss, our symmetrization method leads to a multi-class extension of the unhinged loss function. In most previous works, the unhinged loss in the multi-class setting was taken to be equivalent to the MAE \citep{Zhang2018}, \citep{pmlr-v202-zhu23o}. This is not the case in our work. In our case, the multi-class unhinged loss is a linear symmetric loss function like its binary counterpart. This means that no softmax function is being used at the final layer. The work of \cite{Patrini2016MakingDN} considers a loss function that they call unhinged in their experiments, but it is actually not a symmetric loss function. It is equivalent up to an additive constant and a multiplicative constant to minus the logit at the target label. The same multi-class extension of the unhinged loss has previously appeared in \citet{shoham2021exploration} and \citet{Zhou2023unhinged}, but in different contexts and for different purposes. \citet{shoham2021exploration} refer to it as the multi-category unhinged loss and use it as the basis for analyzing the asymptotic robustness of output-regularized losses. \citet{Zhou2023unhinged} obtain it by removing the maximum operator from the multi-class hinge loss in order to simplify the theoretical analysis of gradient-descent dynamics. Our work instead derives the multi-class unhinged loss from a general symmetrization operator and investigates its fundamental role among multi-class symmetric loss functions. We also introduce the new robust loss functions SGCE and $\alpha$-MAE, which build upon the multi-class unhinged loss.

The closest work to our own is \citep{Ma_normalized}. They propose a normalization applicable to any loss function that leads to a symmetric loss function. They observe however that their robust loss functions often suffer from underfitting. The solution that they propose is called the active-passive loss framework. An active loss function is only optimizing directly at the specified label. A passive loss also explicitly minimizes the probability for other classes. An active-passive loss is then defined to be a weighted combination of an active and a passive loss. Both the active and the passive loss are required to be robust in order to ensure that the combination is also robust. Subsequently, \citet{NEURIPS2023_15f4cefb} proposed the normalized negative cross-entropy as a substitute to the MAE in the active-passive framework. Our work instead proposes a different general method than normalization to produce symmetric loss functions. Our method exploit the fact that there is a symmetric loss function hidden within any loss function, as demonstrated by our general decomposition result.     

 The work of \cite{pmlr-v48-patrini16} investigates binary loss functions that can be decomposed as a sum of a class-insensitive term and a linear term. They refer to  these losses as linear-odd losses (the odd part of the loss is linear). The labeled data centroid becomes the quantity of interest in the linear component of the loss and the subsequent works of \cite{Gao_Wang_li_Zhou_2016}, \cite{8839365} and \cite{9294086} proposed estimators for this centroid when only corrupted data is available. \citet{DBLP:conf/sdm/DingZZLT022} extended these ideas to the multi-class case by decomposing the multi-class mean squared error loss in order to also reduce the problem to centroid estimation. Our work considers instead the general decomposition of any multi-class loss function into a symmetric loss function and a class-insensitive term. We discuss the multi-class data centroid related to the multi-class unhinged loss function in Appendix \ref{kernellearning}. 

 \section{Assumptions on multi-class loss functions}
 \label{assumptions}
We consider loss functions of the form $L(z,y)$, where $z=(z_1,\cdots, z_C)=h(x)\in \mathbb{R}^C$ is the score vector for some neural network $h$, $x\in \mathbb{R}^d$ is an input and $y\in\{1,\cdots,C\}$ is a label for an example $(x,y)$ sampled from a distribution $D$ on $\mathbb{R}^d \times \{1,\cdots,C\}$. In some of the theoretical results below, we will require the following
monotonicity assumption.
\begin{de}
    We say that $L(z,y)$ is a non-increasing multi-class loss function if for all $y$, the function $L(z,y)$ is a non-increasing function of $z_y$ when $z_k$ for $k\neq y$ is kept fixed.
\end{de}
We will also often require a form of symmetry between the classes (this is different from the notion of symmetry related to noise robustness). This is done to ensure that $L(z,k)$ and $L(z',k')$, where $k\neq k'$, will be the same function when the roles of $k$ and $k'$ are swapped. We will refer to this property as \textit{invariance to permutations} and it is defined precisely below.
\begin{de}
    The loss function $L(z,y)$ is invariant to permutations $\tau$ on $C$ elements if \\$L(\tau(z),\tau(y))=L(z,y)$ for all $z$, $y$, and $\tau$. A permutation $\tau$ acts on $z$ by permuting the components of $z$, that is, $\tau(z)=(z_{\tau^{-1}(1)},\cdots,z_{\tau^{-1}(C)})$.
\end{de}
An example of a loss function satisfying both assumptions above is the standard cross-entropy loss defined by $L(z,y)=-\log(p_y)$, where $p_y=\frac{\exp(z_y)}{\sum_{k=1}^C \exp(z_k)}$ is obtained via the softmax function. Another simple example is the linear loss function $L(z,y)=-z_y$. An example of a loss function that does not satisfy the invariance to permutations property is $L(z,y)=-yz_y$.

\section{Symmetrization of loss functions}
\label{symmetrizationLoss}
We ask the question of how to decompose a loss function into a sum of a symmetric loss function and a class-insensitive term. It happens to be the case that there is a unique such decomposition up to constants.

\begin{prop}
\label{unique}
    There is a unique (up to constants) decomposition of a loss function into a sum of a symmetric loss function and a class-insensitive term. The symmetric component is given by 
    \begin{equation}
\label{sym}
    \displaystyle L^{sym}(z,y):=L(z,y)-\frac{1}{C}\sum_{k=1}^C L(z,k).
\end{equation}
\end{prop}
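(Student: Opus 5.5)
We prove existence first and uniqueness second, working with the two definitions the statement relies on. A loss $S(z,y)$ is symmetric if $\sum_{k=1}^C S(z,k)$ is a constant independent of $z$. A term $G(z,y)$ is class-insensitive if it does not depend on $y$, i.e.\ $G(z,y)=g(z)$ for some function $g:\mathbb{R}^C\to\mathbb{R}$. We seek $L(z,y)=S(z,y)+g(z)$ with $S$ symmetric. "Up to constants" means that $S$ is determined up to an additive constant $c$ and $g$ up to $-c$, since the splitting $S+c$, $g-c$ is always available.

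\textbf{Existence.} Set $g(z):=\frac{1}{C}\sum_{k=1}^C L(z,k)$ and $S:=L^{sym}$ as in \eqref{sym}. Then $L=L^{sym}+g$ trivially, and $g$ is class-insensitive. Summing \eqref{sym} over $y$ gives
\[
\sum_{y=1}^C L^{sym}(z,y)=\sum_{y=1}^C L(z,y)-\frac{C}{C}\sum_{k=1}^C L(z,k)=0,
\]
which does not depend on $z$. Hence $L^{sym}$ is symmetric.

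\textbf{Uniqueness.} Suppose $L=S+g$ with $\sum_k S(z,k)=K$ constant. Summing over $y$ gives
\[
\sum_{k=1}^C L(z,k)=K+C\,g(z),
\]
so $g(z)=\frac{1}{C}\sum_k L(z,k)-\frac{K}{C}$. Substituting back yields $S(z,y)=L^{sym}(z,y)+\frac{K}{C}$. Any such decomposition therefore coincides with the one above up to the constant $K/C$ moved between the two terms, which is exactly the claimed uniqueness.

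There is no real obstacle here. The only point needing care is to pin down what "class-insensitive" and "up to constants" mean, namely that the symmetry constant $K$ is free and accounts for the only ambiguity. Once that is fixed, summing over the label determines $g$ completely.
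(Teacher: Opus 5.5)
Your proof is correct and takes essentially the same route as the paper: existence uses the same explicit construction, and uniqueness comes from summing over the labels. The only difference is cosmetic: the paper subtracts two decompositions and shows that a loss both symmetric and class-insensitive is constant (via $\sum_{y} L'(z,y) = C L'(z,k)$), whereas you sum a single decomposition to solve for $g$ directly, which gives the symmetric part as $L^{sym}+K/C$ in one step.
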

It is not difficult to verify that if $L(z,y)$ satisfies the property of invariance to permutations, then $L^{sym}(z,y)$ must also satisfy it. In the following subsections, we apply equation \ref{sym} to different loss functions.
\subsection{Symmetrization of the cross-entropy loss}
\label{symcross}
It is easy to verify that the symmetrization of the multiclass cross-entropy loss is the linear function 
\begin{align}
\label{un}
 \displaystyle -z_y+\frac{1}{C}\sum_{k=1}^C z_k.   
\end{align}

We refer to \ref{un} as the \textit{multi-class unhinged loss} (see also \citealp{shoham2021exploration} and \citealp{Zhou2023unhinged}). We note that since $-\log(p_y)=-z_y+\log(\sum_{k=1}^C \exp(z_k))$ and the term $\log(\sum_{k=1}^C \exp(z_k))$ is class-insensitive, the symmetrization of the cross-entropy loss is the same as the symmetrization of the linear loss $-z_y$.

In cases where the original loss function is the negative log-likelihood, our symmetrization method can be interpreted as a form of regularization, induced by applying data-dependent Dirichlet priors to the network's outputs in a specific amount. Further details on this interpretation are provided in Appendix \ref{cond}.

\subsection{Symmetrization of the mean squared error}
The mean squared error for classification is given by
\begin{center}
    $L(z,y)=||e_y-s(z)||_2^2=||s(z)||_2^2+1-2p(y|z)$, 
\end{center}
where $e_y$ is the one-hot encoding for the label $y$, $s$ is the softmax function and the $y^{th}$ coordinate of $s(z)$ is $p(y|z)$. Since the term $||s(z)||_2^2$ is independent of the label, the symmetrization operator removes it and we are left with a loss equivalent to the MAE (the MAE is defined as $1-p(y|z)$).
 \subsection{Symmetrization of the generalized cross-entropy loss (SGCE)}
The generalized cross-entropy loss (GCE) \citep{Zhang2018} is defined by
\begin{center}
$L_q(z,y):=\frac{1-p(y|z)^q}{q}$,
\end{center}
where $q\in (0,1]$. When $q$ goes to $0$, the loss converges to the cross-entropy. When $q=1$, we get the MAE. The symmetrization of the generalized cross-entropy loss (SGCE) leads therefore to a form of interpolation between the multi-class unhinged and the MAE. The unhinged loss function is robust by maintaining larger gradients for examples already correctly classified. The MAE is robust by reducing the gradient of incorrectly classified examples (since they might be corrupted). The SGCE loss function allows to realize a trade-off between these two strategies. 
\subsection{Symmetrization of the cosine similarity loss}
Consider the cosine similarity loss between the score vector $z$ and the one-hot encoding $e_y$ for the label: $1- \frac{z\cdot e_y}{||z||||e_y||}$. The symmetrization of this loss is the multi-class unhinged but with the normalization of $z$ instead of $z$ as input. This is a simple way to address the fact that the unhinged is negatively unbounded (see \ref{norm} for experimental details).

\section{Some properties of the multi-class unhinged}
\label{symdiff}
\subsection{Uniqueness among convex and symmetric loss functions}
\label{symcross2}

The multi-class unhinged loss satisfies the same fundamental result as the binary unhinged loss.
\begin{thm}
\label{convex}
   The multi-class unhinged loss is the unique convex, non-trivial, non-increasing, multi-class symmetric loss function satisfying the property of invariance to permutations (up to an additive and a multiplicative constant).
\end{thm}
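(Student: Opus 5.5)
Assume that for each $y$ the function $L(\cdot,y)$ is convex on $\mathbb{R}^C$, that $L$ is non-increasing and invariant to permutations, and that $\sum_{k=1}^C L(z,k)=K$ is constant. The plan is to show that each $L(\cdot,y)$ is affine and then to pin down its coefficients.

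\textbf{Step 1: each $L(\cdot,y)$ is affine.} Fix $y$ and write $L(z,y)=K-\sum_{k\neq y}L(z,k)$. The right-hand side is a constant minus a sum of convex functions, so it is concave. Hence $L(\cdot,y)$ is both convex and concave on $\mathbb{R}^C$, and a real function on $\mathbb{R}^C$ that is convex and concave is affine. So we can write
\[
L(z,y)=a_y\cdot z+b_y, \qquad a_y\in\mathbb{R}^C,\ b_y\in\mathbb{R}.
\]

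\textbf{Step 2: use invariance to permutations.} Invariance gives $L(\tau(z),\tau(y))=L(z,y)$ for every permutation $\tau$.
\begin{itemize}
\item First take $\tau$ fixing $y$. Then the coefficients $(a_y)_k$ for $k\neq y$ must all be equal, and the common value does not depend on $y$.
\item Next take $\tau$ moving $y$ to $y'$. This shows that $(a_y)_y$ and $b_y$ are also independent of $y$.
\end{itemize}
Therefore
\[
L(z,y)=\alpha z_y+\beta\sum_{k\neq y}z_k+b
\]
for scalars $\alpha,\beta,b$.

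\textbf{Step 3: impose symmetry.} Summing over $y$ gives
\[
\sum_{y=1}^C L(z,y)=\bigl(\alpha+(C-1)\beta\bigr)\sum_{k=1}^C z_k+Cb.
\]
This is constant in $z$ exactly when $\beta=-\alpha/(C-1)$. Substituting, and using $\sum_{k\neq y}z_k=\sum_k z_k-z_y$, we get
\[
L(z,y)=\alpha\Bigl(z_y-\tfrac{1}{C-1}\bigl(\textstyle\sum_k z_k-z_y\bigr)\Bigr)+b=\frac{\alpha C}{C-1}\Bigl(z_y-\frac1C\sum_{k=1}^C z_k\Bigr)+b.
\]
This is a scalar multiple of the multi-class unhinged loss plus a constant.

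\textbf{Step 4: sign of the constant.} Non-increasing in $z_y$ forces $\alpha\le 0$. Non-triviality means $L$ is not constant, which rules out $\alpha=0$. Hence $L$ is a positive multiple of $-z_y+\frac1C\sum_k z_k$ plus an additive constant, which is the claim. Conversely, the multi-class unhinged loss is affine (hence convex), non-increasing, invariant to permutations, and symmetric by Proposition~\ref{unique}, so it does satisfy all the hypotheses.

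\textbf{Main obstacle.} The key step is Step~1, the affineness argument. It needs the loss to be defined and finite on all of $\mathbb{R}^C$, and it needs "convex" to mean convex in $z$ for every fixed $y$, which is how I will read the hypothesis. The bookkeeping with permutations in Step~2 is routine.
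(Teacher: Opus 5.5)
Your proof is correct and follows the paper's route. Step~1 is exactly the paper's convex-plus-concave-implies-affine argument, and Steps~2--4 are a streamlined version of Lemma~\ref{helpingLemma} and Proposition~\ref{Linear}: you parametrize directly by $\alpha,\beta,b$ rather than using $a_k(y)=a_y(k)$ to convert $\sum_y a_k(y)=0$ into $\sum_k a_k(y)=0$, and you also check that the additive constant $b_y$ is independent of $y$, which the paper leaves implicit. One small slip is that the independence of the common off-diagonal value from $y$ comes from the permutations sending $y$ to $y'$, not from those fixing $y$, but your second bullet already supplies this.
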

 The invariance to permutations property plays a crucial role in the proof for the multi-class case. The initial step in our proof involves establishing constraints on the coefficients of a linear loss that satisfies the invariance to permutations property (Lemma \ref{helpingLemma}). These constraints on the coefficients are subsequently used to prove uniqueness among linear loss functions (Proposition \ref{Linear}). The proof can then be completed by observing that any convex symmetric loss function must be both convex and concave, and therefore is affine.

\subsection{Linear approximations of multi-class loss functions}
\label{linapprox}

It was observed by \citet{shoham2021exploration} that the
first-order approximation of the cross-entropy loss at the
equal-logit point $z=0$ is, up to an additive constant, the
multi-class unhinged loss. Since the same gradient is obtained at
every vector $z'$ whose components are equal, this observation
extends immediately to the entire set of equal-component score
vectors. The cross-entropy loss is therefore \say{locally symmetric}, in a first-order sense,
around every such $z'$ despite not being globally symmetric.

We now establish a structural counterpart of this observation. For any non-increasing symmetric loss satisfying invariance to permutations,
the local linear approximation at every noncritical equal-component
vector is necessarily equivalent to the multi-class unhinged loss.

\begin{prop}
\label{linApprox}
    Assume that $L(z,y)$ is non-increasing, symmetric, satisfies the property of invariance to permutations and is differentiable at $z'$ a vector with equal components. Then, the linear approximation of $L(z,y)$ at $z'$ is equivalent to the multi-class unhinged loss function if $\nabla_z L(z,y)|_{z=z'}\neq 0$.
\end{prop}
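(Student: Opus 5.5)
The plan is to compute the gradient $g^{(y)}:=\nabla_z L(z,y)|_{z=z'}$ directly and show it is a nonzero multiple of the gradient of the multi-class unhinged loss, $-e_y+\frac{1}{C}\1$. The linear approximation at $z'$ is $L(z',y)+g^{(y)}\cdot(z-z')$. By symmetry, $L(z',y)=\frac{1}{C}\sum_k L(z',k)$ for every $y$, since invariance to permutations gives $L(z',y)=L(\tau(z'),\tau(y))=L(z',\tau(y))$ when $z'$ has equal components. So the constant term does not depend on $y$, and equivalence reduces entirely to the gradients.

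First I will pin down the structure of $g^{(y)}$ using invariance to permutations. Differentiating $L(\tau(z),\tau(y))=L(z,y)$ at $z=z'$, and using $\tau(z')=z'$, gives $g^{(\tau(y))}_{\tau(j)}=g^{(y)}_j$ for all $\tau$. Taking $\tau$ to fix $y$ and permute the other indices shows that all components $g^{(y)}_j$ with $j\neq y$ share a common value $b_y$. Taking $\tau$ to send $y$ to $y'$ shows that $a:=g^{(y)}_y$ and $b:=b_y$ do not depend on $y$. Hence $g^{(y)}=(a-b)e_y+b\1$. This is the same constraint on the coefficients as in Lemma \ref{helpingLemma}, so I could cite that lemma instead, applied to the linear part.

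Next I will use the symmetry condition. Since $\sum_k L(z,k)$ is constant in $z$, its gradient vanishes: $\sum_k g^{(k)}=0$. This gives $(a-b)\1+Cb\1=0$, so $a=-(C-1)b$. Therefore $g^{(y)}=-Cb\,e_y+b\1=Cb\left(-e_y+\frac{1}{C}\1\right)$, which is $Cb$ times the unhinged gradient.

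Finally I will fix the sign. The hypothesis $\nabla L\neq 0$ forces $b\neq 0$. Monotonicity gives $a=\partial L/\partial z_y\le 0$, hence $b\ge 0$, and so $b>0$. The linear approximation is therefore $\text{const}+Cb\left(-z_y+\frac{1}{C}\sum_k z_k\right)$ with $Cb>0$, i.e.\ equivalent to the unhinged loss up to a positive multiplicative constant and an additive constant. The only delicate step is the careful differentiation of the permutation identity, in particular tracking $\tau^{-1}$ in the action on $z$. I expect it to be routine once indices are written out.
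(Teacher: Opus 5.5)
Your proof is correct and follows essentially the same route as the paper. The paper shows that $z\mapsto[\nabla_z L(z,y)|_{z=z'}]^{T}z$ inherits invariance to permutations (via the chain rule and $\tau^{-1}(z')=z'$) and the symmetry condition (by differentiating $\sum_k L(z,k)=\text{const}$), and then invokes Lemma~\ref{helpingLemma} and Proposition~\ref{Linear}; you carry out those coefficient arguments directly on the gradient instead, and your explicit monotonicity check ($a\le 0\Rightarrow b>0$) supplies the non-increasing hypothesis of Proposition~\ref{Linear}, which the paper uses without verifying it for the linearization.
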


$\beta$-smoothness allows to bound the size of the remainder of the linear approximation to a loss function $\phi(z,y)$. We can then give a quantitative result about the gap between the solution obtained with $\phi$ and the multi-class unhinged solution. A smaller $\beta$ and stronger norm control will lead to a solution with a closer unhinged risk to the optimal unhinged risk when optimizing with the loss $\phi$.
\begin{prop}
\label{linapproxprop}
    Consider an euclidean ball of radius $R$ around $0\in \mathbb{R}^d$ for the domain of $x$. Assume that the loss $\phi(z,y)$ is $\beta$-smooth for all $y$ and that its linear approximation at $0$ is the multi-class unhinged loss function (denoted by $L(z,y)$). Furthermore, assume that $l$ is the number of layers of the bias-free feedforward neural network $f$, the non-linearity is $c$-Lipschitz at every layer, has value $0$ at $0$, and the product of the euclidean norm of the weights of each layer is bounded by $r$. Then,
    \begin{center}
        $L_D(f^{*}_{\phi})-L_D(f^{*}_{L})\leq R^2\beta c^{2(l-1)} r^2$,
    \end{center}
    where $f^{*}_{\phi}$ is a minimizer for the true risk for loss $\phi$ and $f^{*}_{L}$ is a minimizer for the true risk for the multi-class unhinged loss. This means that $f^{*}_{\phi}$ will reach a similar unhinged risk to the optimal unhinged risk (if $\beta$ and $r$ are small).
\end{prop}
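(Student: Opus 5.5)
The plan is to control $\phi$ by its linear approximation $L$ through the standard $\beta$-smoothness sandwich, and then bound the size of the network outputs $f(x)$ on the ball using the Lipschitz and norm assumptions. Write $g_y(z)=\phi(z,y)$. By hypothesis the first-order expansion of $g_y$ at $0$ is the multi-class unhinged loss $L(z,y)$, with any additive constant $\phi(0,y)$ absorbed; by symmetry and invariance to permutations this constant does not depend on $y$, so it cancels in risk differences. $\beta$-smoothness then gives, for every $z$ and $y$,
\begin{equation*}
|\phi(z,y)-L(z,y)|\le \tfrac{\beta}{2}\|z\|_2^2 .
\end{equation*}

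The second step bounds $\|f(x)\|_2$ uniformly over the hypothesis class and over $\|x\|_2\le R$. The network is bias-free with layers $W_1,\dots,W_l$ and nonlinearity $\sigma$, which is $c$-Lipschitz with $\sigma(0)=0$, so $\|\sigma(u)\|\le c\|u\|$. Induction over the layers, with operator norm bounded by the euclidean (Frobenius) norm, gives
\begin{equation*}
\|f(x)\|_2\le c^{\,l-1}\prod_{i}\|W_i\|\,\|x\|_2\le c^{\,l-1} r R.
\end{equation*}
There is no nonlinearity after the last layer, so the exponent is $l-1$. Hence $\|f(x)\|_2^2\le R^2c^{2(l-1)}r^2$. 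Integrating the sandwich over $D$ then yields, for every admissible $f$,
\begin{equation*}
|\phi_D(f)-L_D(f)|\le \tfrac{\beta}{2}R^2c^{2(l-1)}r^2 .
\end{equation*}

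The third step is the usual optimizer comparison:
\begin{equation*}
L_D(f^*_\phi)\le \phi_D(f^*_\phi)+\tfrac{\beta}{2}M \le \phi_D(f^*_L)+\tfrac{\beta}{2}M\le L_D(f^*_L)+\beta M,
\end{equation*}
with $M=R^2c^{2(l-1)}r^2$. This uses that $f^*_\phi$ minimizes $\phi_D$ over the same constrained class that contains $f^*_L$, so $\phi_D(f^*_\phi)\le\phi_D(f^*_L)$. The resulting bound $\beta M$ is exactly the claimed $R^2\beta c^{2(l-1)}r^2$.

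The main obstacle is bookkeeping rather than depth. First, both minimizers must be taken over the same norm-constrained class; otherwise the unhinged risk is unbounded below. Second, the constant term $\phi(0,y)$ must be handled, since it is class-insensitive or at least identical in both risks. Third, the layer-count exponent has to be matched to the convention that the last layer has no activation.
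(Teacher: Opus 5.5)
Your proof is correct and follows essentially the same route as the paper: write $\phi(z,y)=\phi(0,y)+L(z,y)+R_y(z)$, use $\phi_D(f^*_\phi)\le\phi_D(f^*_L)$, and bound each of the two remainder terms by $\frac{\beta}{2}(Rc^{l-1}r)^2$, which gives $\beta R^2c^{2(l-1)}r^2$. One small correction: symmetry and invariance to permutations are not hypotheses of this proposition, so you cannot use them to make $\phi(0,y)$ independent of $y$; the justification you need is your fallback, namely that $\mathbb{E}_D[\phi(0,y)]$ does not depend on $f$ (in the paper's chain this term in fact cancels pointwise).
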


\section{Combining the Multi-Class Unhinged with Non-Linear Multi-Class Symmetric Loss Functions}
\label{comb}
From Proposition \ref{linapprox}, any symmetric loss function \( L(z, y) \) satisfying the assumptions of the proposition can be expressed as:
\[
    L(z, y) = L(0, y) + \text{Constant} \left(-z_y + \frac{1}{C} \sum_{k=1}^C z_k \right) + g(z, y),
\]
where \( g(z, y) \) represents the residual of the linear approximation of \( L(z, y) \) at \( z = 0 \). A straightforward way to control the degree of non-linearity in the loss function is to introduce a hyperparameter \( \alpha \in [0, \infty) \) in front of \( g(z, y) \). Let this new loss function be denoted as \( L_{\alpha}(z, y) \). If \( L(z, y) \) is \( \beta \)-smooth, then \( L_{\alpha}(z, y) \) is \( \alpha \beta \)-smooth. Hence, \( \alpha \) controls the \( \beta \)-smoothness of the loss.

Without loss of generality, assume that \( \text{Constant} = 1 \) (otherwise, rescale the loss accordingly). Then, up to an additive constant, the loss function can be written as:
\[
    L_{\alpha}(z, y) = (1 - \alpha) L_0(z, y) + \alpha L(z, y),
\]
where \( L_0(z, y) \) is exactly equal to the multi-class unhinged loss function. We apply this approach to the MAE loss and refer to the resulting loss as \( \alpha \)-MAE. In this case, the constant in front of the unhinged loss is \( \frac{1}{C} \), so we rescale the MAE by \( C \), leading to the following expression:
\[
    \alpha \text{-MAE} = (1 - \alpha) \left( -z_y + \frac{1}{C} \sum_{k=1}^C z_k \right) + \alpha C \left( 1 - p(y | z) \right),
\]
for \( \alpha \in [0, \infty) \).

 \section{Experiments}
\label{exp}
\subsection{Method}
We first compare the performance of the multi-class unhinged loss, SGCE, and $\alpha$-MAE against various robust loss functions on CIFAR-10 and CIFAR-100 \citep{Krizhevsky09learningmultiple}, as presented in Table \ref{t1}. For the CIFAR-10 experiments, we trained an 8-layer CNN for 120 epochs, while for CIFAR-100, we used a ResNet-34 \citep{he2016deep} architecture and trained it for 200 epochs. The comparison includes CE, MAE, GCE \citep{Zhang2018}, SCE \citep{Wang2019SymmetricCE}, NCE+RCE \citep{Ma_normalized}, NCE+AGCE \citep{pmlr-v139-zhou21f}, and ANL-CE \citep{NEURIPS2023_15f4cefb}. Both symmetric label noise (see \ref{uni}) and asymmetric label noise (non-uniform corruption probabilities based on class similarities, as described in \citep{Patrini2016MakingDN}) were considered.

To ensure a fair comparison, we implemented our method within the public implementation of \citet{NEURIPS2023_15f4cefb}. The key difference is that we tuned the weight decay term separately for each loss function, whereas \citet{NEURIPS2023_15f4cefb} only tuned an additional regularization parameter $\delta$ for their method without adjusting the weight decay for other loss functions. Otherwise, we followed the same experimental protocol as in \citet{Ma_normalized} and \citet{NEURIPS2023_15f4cefb}. On CIFAR-10, the weight decay was tuned over the set $\{1 \times 10^{-4}, 5 \times 10^{-4}, 1 \times 10^{-3}, 5 \times 10^{-3}, 1 \times 10^{-2}\}$, and for CIFAR-100, over $\{1 \times 10^{-5}, 5 \times 10^{-5}, 1 \times 10^{-4}, 5 \times 10^{-4}, 1 \times 10^{-3}\}$. The hyperparameter $q$ for SGCE was selected from $\{0.2, 0.35, 0.50, 0.65, 0.80\}$, and the hyperparameter $\alpha$ for $\alpha$-MAE was chosen from $\{0.25, 0.50, 1.0, 2.0, 4.0\}$.

Hyperparameters were tuned using $10\%$ of the training data as a validation set, based on a symmetric noise rate of $80\%$. These hyperparameters were then used across all other noise rates, including asymmetric noise. The training algorithm used in all cases was SGD with momentum. The learning rate and other SGD parameters were not tuned. For example, the learning rate for CIFAR-10 was fixed at 0.01 and for CIFAR-100 at 0.1, consistent with \citet{NEURIPS2023_15f4cefb}. Furthermore, we used a cosine annealing schedule to maintain consistency and ensure fair comparison with \citet{Ma_normalized} and \citet{NEURIPS2023_15f4cefb}. While this learning rate schedule may not optimize performance, as shown in Table \ref{t4}—where we report results on CIFAR-100 using a constant learning rate divided by 10 at $95\%$ of training (one-step decay of learning rates)—it is a commonly used schedule and was kept to ensure consistency with previous work. More details about the training configuration and hyperparameters are given in Appendix~\ref{conf}.

We also evaluated our approach in settings with natural noise, comparing it to previously proposed robust loss functions. Results for CIFAR-10N and CIFAR-100N \citep{wei2022learning} are shown in Table \ref{N}. The same hyperparameters used for CIFAR-10 and CIFAR-100 were applied in these experiments without further tuning. Additionally, we report SGCE results on Mini WebVision (using the Google-resized data and the first 50 classes) \citep{li2017webvision}, \citep{Jiang2017MentorNetLD}. Performance was evaluated on both the WebVision validation set and the ILSVRC 2012 validation set \citep{ILSVRC15}, with results presented in Table \ref{t2}.
\subsection{Normalization of the score vector}
\label{norm}
Since the loss functions resulting from the symmetrization operation and involving the multi-class unhinged loss can be negatively unbounded, a method to prevent numerical overflows is required. We considered two approaches: applying Euclidean normalization to the score vector and adding a batch normalization layer to the score vector. To ensure numerical stability in Euclidean normalization, an epsilon value of $1 \times 10^{-5}$ is used to prevent division by zero by clamping the denominator away from 0. Using batch normalization on the score vector was also considered in \citep{Patrini2016MakingDN}.

When using the cosine annealing schedule for learning rates, Euclidean normalization consistently outperformed batch normalization. The results in Tables \ref{t1} and \ref{N} were obtained with Euclidean normalization applied to the unhinged loss, SGCE, and $\alpha$-MAE. Under the one-step learning rate decay schedule, SGCE performed better with batch normalization, while both the unhinged loss and $\alpha$-MAE achieved better results with Euclidean normalization. Results obtained using batch normalization are indicated by \say{(BN)}, while all other results reported were obtained with Euclidean normalization.
\begin{table}[]
\caption{Accuracy (mean of $3$ runs with standard deviation in parentheses) of the multi-class unhinged, SGCE and $\alpha$-MAE compared to previously proposed robust loss functions on CIFAR10 and CIFAR100 with symmetric noise rate $\eta\in \{0.4,0.6,0.8\}$ and asymmetric noise rate $\eta\in \{0.2,0.3,0.4\}$. The best result for each case is in bold.}
\label{t1}
\begin{center}
\scalebox{0.65}{%
\begin{tabular}{c|c|c|ccc|ccc}
\hline
\multirow{2}{*}{Datasets}  & \multirow{2}{*}{Loss functions} & \multirow{2}{*}{Clean} & \multicolumn{3}{c|}{Symmetric Noise Rate ($\eta$)}    & \multicolumn{3}{c}{Asymmetric Noise Rate ($\eta$)}    \\
                           &                                 &                        & 0.4                  & 0.6                  & 0.8                  & 0.2                  & 0.3                  & 0.4                  \\ \hline

\multirow{12}{*}{CIFAR10}  & CE                              &     93.45(0.30)      &     69.69(0.52)      &  51.88(0.37)         &   32.59(0.76)        &  85.84(0.26)        &    81.08(0.47)      &   75.43(0.21)     \\
                           & MAE                             &   88.80(0.17)         &   84.33(0.12)        &   77.27(0.24)        &   47.86(0.48)       &   84.47(2.65)        &  66.56(4.76)         &  58.72(2.24)        \\
                           & GCE                             &    93.48(0.04)        &  74.28(0.13)         &  56.30(0.44)        & 39.88(2.11)         &       85.96(0.17)   &     80.78(0.37)     & 75.34(0.30)          \\
                           & SCE                             &   92.99(0.06)         &  87.85(0.36)         &  79.80(0.16)       &  22.43(2.18)        &  90.10(0.06)        &    85.29(0.41)      &    76.26(0.15)      \\
                           
                           & NCE+RCE                         & 90.94(0.01)            & 86.03(0.13)          & 79.89(0.11)          & 55.52(2.74)          & 88.36(0.13)          & 84.84(0.16)          & 77.75(0.37)          \\
                           & NCE+AGCE                        & 91.08(0.06)            & 86.16(0.10)          & 80.14(0.27)          & 55.62(4.78)          & 88.48(0.09)          & 84.79(0.15)          & \textbf{78.60(0.41)}          \\
                           & ANL-CE                          & 91.66(0.04)            & 87.28(0.02)          & 81.12(0.30)          & 61.27(0.55)          & 89.13(0.11)          & 85.52(0.24)          & 77.63(0.31)          
                           \\ 
                           \cline{2-9} 
                           & Unhinged              &    93.03(0.11)        &  86.21(0.31)         &  76.58(0.20)        &    50.47(0.89)       &    88.48(0.33)       & 83.11(0.15) & 76.55(0.23) \\
                           & SGCE        &   93.05(0.22)         & 87.58(0.12) & 79.57(0.48) & 61.06(1.60) &   89.38(0.07)        &   83.20(0.45)  &   75.39(0.39) \\ 
                           & $\alpha$-MAE       &   92.64(0.18)         & \textbf{88.17(0.15)} & \textbf{81.82(0.62)} & \textbf{62.08(1.24)} &  \textbf{90.07(0.31)}         &   \textbf{86.11(0.17)}  & 77.02(0.65)    \\ \hline
\multirow{11}{*}{CIFAR100} & CE                              &    77.25(0.60)        &  47.75(0.31)        & 29.03(0.23)         &   14.74(0.44)         &   63.70(0.23)        &  55.35(0.54)        &   \textbf{45.49(0.15)}      \\
                           & MAE                             &   16.74(2.18)          &  7.29(0.89)        &   3.78(0.74)       &  2.42(0.96)         &   7.44(0.75)        &  6.30(0.56)         &   5.62(0.30)        \\
                           & GCE                             &   61.37(0.71)         &  56.42(0.37)        &    46.31(1.01)       &  21.46(0.06)        &   55.27(1.07)        &   48.05(0.81)       &  40.20(0.56)        \\
                           & SCE                             &  76.37(0.19)           &    48.44(0.14)      &   30.58(0.57)        &   11.65(1.38)        &   63.02(0.01)       & 54.52(0.28)          &    44.87(0.50)       \\                      
                           & NCE+RCE                         & 68.22(0.28)            & 57.97(0.30)          & 46.26(1.07)          & 25.65(0.51)          & 62.77(0.53)          & 55.62(0.56)          & 42.46(0.42)          \\
                           & NCE+AGCE                        & 68.61(0.12)            & 59.74(0.68)          & 47.96(0.44)          & 24.13(0.07)          & 64.05(0.25)          & 56.36(0.59)          & 44.90(0.62)          \\
                           & ANL-CE                          & 70.68(0.23)            & 61.80(0.50)          & 51.52(0.53)          & 28.07(0.28)          & 66.27(0.19)          & \textbf{59.76(0.34)}          & 45.41(0.68)          
                           \\ \cline{2-9} 
                           & Unhinged               &  74.65(0.44)          &  59.90(0.03)         & 44.54(0.35)          &   20.97(0.67)        & 59.14(0.21) &  50.16(0.30)&  42.56(0.28)\\
                           & SGCE        &     74.62(0.15)       & 63.48(0.30) & 50.25(0.51) & \textbf{31.56(0.42)} & 60.39(0.36)        &   49.08(0.36)        &  41.05(0.23)        \\ 
                           & $\alpha$-MAE        &    73.96(0.26)       & \textbf{65.90(0.23)} & \textbf{56.42(0.19)} & 29.89(0.64)&    \textbf{68.30(0.52)}      &  56.01(0.45)        &   42.04(0.31)        \\ \hline
\end{tabular}}
\end{center}

\end{table}

\begin{table}[h!]
\caption{Accuracy (mean of $3$ runs with standard deviation in parentheses) of the multi-class unhinged, SGCE and $\alpha$-MAE compared to previously proposed robust loss functions on CIFAR-10N and CIFAR-100N. The best result for each case is in bold.}
\label{N}
\begin{center}
\scalebox{0.80}{%
\begin{tabular}{c|ccccc|c}
\hline
\multirow{2}{*}{Loss functions} & \multicolumn{5}{c|}{CIFAR-10N}                                                                                   & \multirow{2}{*}{CIFAR-100N} \\ 
                                & Aggregate            & Random 1             & Random 2             & Random3              & Worst                &                             \\ \hline
NCE+RCE                         & 89.17(0.28)          & 87.62(0.34)          & 87.66(0.12)          & 87.70(0.18)          & 79.74(0.09)          & 54.27(0.09)                 \\
NCE+AGCE                        & 89.27(0.28)          & 87.92(0.02)          & 87.61(0.20)          & 87.62(0.16)          & 79.91(0.37)          & 55.96(0.20)                 \\
ANL-CE                          & 89.66(0.12)          & 88.68(0.13)          & 88.19(0.08)          & 88.24(0.15)          & 80.23(0.28)          & 56.37(0.42)                 \\ \hline
Unhinged                        &     90.34(0.26)                 &        88.53(0.13)              &  88.10(0.21)                    &     88.41(0.19)                 &    77.24(0.19)                  &          54.33(0.34)                   \\
SGCE                            &  90.62(0.18)                    &     89.19(0.02)                &  88.92(0.14)                    &      88.94(0.23)                &     78.47(0.35)                 &              56.31(0.39)               \\
$\alpha$-MAE                    & \textbf{90.67(0.17)} & \textbf{89.57(0.13)} & \textbf{89.37(0.03)} & \textbf{89.49(0.23)} & \textbf{81.28(0.37)} & \textbf{59.41(0.31)}                   \\ \hline
\end{tabular}}
\end{center}
\end{table}

\begin{table}[h!]
\caption{Accuracy of SGCE compared to previously proposed robust loss functions when training a ResNet-50 on the WebVision training data set. Performance on the ILSVRC 2012 validation data and the WebVision validation data are reported. The best result is in bold for each case.}
\label{t2}
\begin{center}
\scalebox{0.82}{%
\begin{tabular}{c|cccccccc}
\hline
Methods       & CE    & GCE   & SCE   & NCE+RCE & NCE+AGCE & ANL-CE & ANL-FL & SGCE(BN)           \\ \hline
ILSVRC12 Val   & 58.64 & 56.56 & 62.60 & 62.40   & 60.76    & 65.00  & 65.56  & \textbf{69.52} \\ \hline
WebVision Val & 61.20 & 59.44 & 68.00 & 64.92   & 63.92    & 67.44  & 68.32  & \textbf{74.04} \\ \hline
\end{tabular}}
\end{center}
\end{table}

\begin{table}[h!]
\caption{Accuracy (mean of $3$ runs with standard deviation in parentheses) of the multi-class unhinged, SGCE and $\alpha$-MAE compared to previously proposed robust loss functions on CIFAR100 with a one-step decay of learning rates. The best result for each case is in bold.}
\label{t4}
\begin{center}
\scalebox{0.73}{%
\begin{tabular}{c|c|ccc|ccc}
\hline
 \multirow{2}{*}{Loss functions} & \multirow{2}{*}{Clean} & \multicolumn{3}{c|}{Symmetric Noise Rate ($\eta$)}    & \multicolumn{3}{c}{Asymmetric Noise Rate ($\eta$)}    \\
                                                           &                        & 0.4                  & 0.6                  & 0.8                  & 0.2                  & 0.3                  & 0.4                  \\ \hline

 CE                              &     75.15(0.26)      &   50.57(0.32)      &   35.64(0.03)      &   21.70(0.41)        &  66.86(0.38)        &    59.75(1.05)     &   49.08(0.41)      \\
                            
                            GCE                             &  66.55(0.78)        &  64.09(0.90)       &   58.41(0.48)        &    38.46(0.63)      &   60.96(0.67)       &   57.59(0.87)      &    48.95(1.27)     \\                                                                       
                            ANL-CE                          &  73.11(0.15)          & 65.09(1.43)         &   58.62(0.59)       &    32.93(0.53)      & 69.91(0.24)         &  65.06(0.26)      &  51.98(0.11)        \\ \hline
                           
                            Unhinged               &  72.45(0.13)         &       65.58(0.38)   &   58.29(0.78)       &  37.11(0.61)        & \textbf{70.33(0.25)} &  \textbf{69.34(0.37)}& 64.84(0.28) \\
                            SGCE(BN)        &  73.43(0.11)          &\textbf{66.65(0.27)}  & \textbf{59.74(0.31)} & \textbf{43.89(0.91)} &       70.17(0.15)  &  63.55(1.70)        &    50.88(1.00)      \\ 
                            $\alpha$-MAE        &   71.61(0.24)       & 64.35(0.20) & 57.73(0.36) & 37.20(0.99) &  69.24(0.20)       &  68.78(0.50)       &  \textbf{65.01(0.50)}        \\ \hline
\end{tabular}}
\end{center}

\end{table}

\section{Discussion of results}
Overall, SGCE and $\alpha$-MAE maintain competitive performance across varying noise levels and datasets, showing that the combination of the multi-class unhinged loss with MAE leads to robustness in both synthetic and natural noise scenarios. $\alpha$-MAE particularly shines on various datasets and noise rates, for example, consistently outperforming other methods on CIFAR-10N and CIFAR-100N. SGCE performed impressively with the one-step decay of learning rates and symmetric noise. Additionally, both loss functions have only one hyperparameter to tune, whereas methods from the active-passive approach typically use two. Notably, the underfitting issues with MAE on CIFAR-100 are resolved by $\alpha$-MAE.

\section{Conclusion}
In this work, we proposed a principled symmetrization method for designing robust loss functions to handle label noise. The symmetrization of the categorical cross-entropy loss leads to the unique convex, non-trivial, non-increasing multi-class symmetric loss function under the technical assumption of invariance to permutations. As such, this loss function extends the binary unhinged loss in the multi-class case. The symmetrization of the generalized cross-entropy loss (SGCE) and the newly introduced $\alpha$-MAE allow for the effective combination of the multi-class unhinged loss with the MAE. Our approach demonstrates competitive performance compared to previously proposed robust loss functions on the benchmark datasets CIFAR-10, CIFAR-100, and WebVision.
\bibliographystyle{plainnat}
\bibliography{references}

\appendix

\section{Uniform label noise and the symmetry condition}
\label{uni}
The influential work of \citep{Ghoshneuro} and \citep{Ghosh2017} introduced the concept of symmetric loss functions and established the fundamental results of statistical consistency that they satisfy when training in the presence of noisy labels. At the limit of infinite data, an optimal classifier for the corrupted distribution will also be an optimal solution for the clean distribution if the loss function is symmetric. In this section, we introduce the concept of symmetric loss functions to readers who are new to these ideas.

Assume that with some probability $p$, instead of sampling the label of an example from the true distribution, we sample the label from a uniform distribution on the $C$ classes. In other words, define the corrupted distribution $\overline{D}$ to have the same marginal distribution as $D$ (that is $\overline{D}_x=D_x$) but with conditional distribution $\overline{D}_{y|x}$ given by 
\begin{center}
$p U(\{1,\cdots, C\})+(1-p)D_{y|x}$,
\end{center}
where $U(\{1,\cdots, C\})$ is a uniform distribution over $\{1,\cdots, C\}$. Given a training set $S$ from $D$, we can corrupt it to get a set $\overline{S}$ by changing the label of an example $(x,y)\in S$ to a different label with probability $\eta=\frac{(C-1)p}{C}$ (each one of the different labels from $y$ having a probability of $\frac{p}{C}$ of being the new label). The probability $\eta$ is usually referred to as the (symmetric) noise rate. We require that $p<1$ or, equivalently, $\eta<\frac{C-1}{C}$.

It is then straightforward to get
\begin{equation*}
     L_{\overline{D}}(h)=\displaystyle\frac{p}{C}\bigg[\mathop{\mathbb{E}}_{x\sim D_x }\sum_{k=1}^C L(h(x),k)\bigg]+(1-p)L_D(h),
\end{equation*}
where $L_D(h)$ is the expected loss (over distribution $D$) of classifier $h$. 
If we could get rid of the term $\mathop{\mathbb{E}}_{x\sim D_x }\sum_{k=1}^C L(h(x),k)$ above, the true risk on the clean distribution would be proportional to the true risk on the corrupted distribution (if $p<1$). The optimizers of $L_{\overline{D}}(h)$ would then be the same as the optimizers of $L_D(h)$. This is the motivation for the symmetry condition.  
\begin{de}
    A loss function $L(z,y)$ is said to be symmetric if, for all $z$,
    \begin{equation*}
        \displaystyle \sum_{k=1}^C L(z,k)= constant.
    \end{equation*}
    
\end{de}
\section{Discussion of the binary case}
\label{taylor2}
\subsection{Decomposition in the binary case and Taylor series}
\label{taylor}
In the binary case, the unique decomposition of a loss function $\phi(z)$ into a sum of a symmetric loss function and a class-insensitive loss function is the sum of its odd part $\frac{\phi(z)-\phi(-z)}{2}$ with its even part $\frac{\phi(z)+\phi(-z)}{2}$. If $\phi$ admits a Taylor series expansion around $0$, it is possible to characterize the symmetry condition using the coefficients of the Taylor series and to express the odd part and the even part with the series. 
\begin{prop}
Assume that the potential $\phi$ is a real analytic function on its domain.  Then, $\phi$ is symmetric if and only if $\phi^{(k)}(0)=0$ for all even $k\geq2$. 
\end{prop}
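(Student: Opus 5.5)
In the binary setting a loss is a potential $\phi(z)$ and the two labels correspond to $\phi(z)$ and $\phi(-z)$. By the discussion preceding the statement, symmetry means that $\phi(z)+\phi(-z)$ is constant in $z$, i.e.\ the even part $\frac{\phi(z)+\phi(-z)}{2}$ is constant. I will work with this characterization throughout. The plan is to expand both $\phi(z)$ and $\phi(-z)$ in their Taylor series at $0$ and compare coefficients.

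Since $\phi$ is real analytic on its domain, which I take to be an interval containing $0$ (or all of $\mathbb{R}$), there is a neighborhood $(-\rho,\rho)$ on which
\[
\phi(z)=\sum_{k\ge 0}\frac{\phi^{(k)}(0)}{k!}z^k .
\]
Substituting $-z$ and adding gives
\[
\phi(z)+\phi(-z)=2\sum_{k \text{ even}}\frac{\phi^{(k)}(0)}{k!}z^k ,
\]
because the odd-degree terms cancel.

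For the forward direction, assume $\phi$ is symmetric. Then the even part $E(z)=\frac{\phi(z)+\phi(-z)}{2}$ is constant, so all of its derivatives of order $\geq 1$ vanish at $0$. Differentiating $E$ term by term, or simply differentiating $\phi(z)+\phi(-z)$ $k$ times, gives $E^{(k)}(0)=\frac{\phi^{(k)}(0)+(-1)^k\phi^{(k)}(0)}{2}$. For even $k$ this equals $\phi^{(k)}(0)$, so $\phi^{(k)}(0)=0$ for all even $k\ge 2$. This direction in fact uses only smoothness, not analyticity.

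For the converse, assume $\phi^{(k)}(0)=0$ for all even $k\ge2$. The series above then shows $E(z)=\phi(0)$ on $(-\rho,\rho)$. To extend this from the neighborhood to the whole domain, I will use that $E$ is itself real analytic on the symmetric domain, being a sum of two analytic functions. The identity theorem then says that an analytic function on a connected set that is constant on an open subset is constant everywhere. So $\phi(z)+\phi(-z)=2\phi(0)$ for all $z$, which is the symmetry condition.

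The main point needing care is this extension step. The Taylor series need not converge on the whole domain, so I will rely on connectedness and the identity theorem rather than on global convergence of the series. If the domain is not symmetric about $0$, I will restrict to the symmetric part $\{z : z,-z \in \text{domain}\}$, on which the symmetry condition is meaningful.
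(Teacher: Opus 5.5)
Your proof is correct and follows the same idea the paper relies on. The paper gives no separate written proof, only the remark that the even part $\frac{\phi(z)+\phi(-z)}{2}$ collects the even-order Taylor terms at $0$, so symmetry is equivalent to $\phi^{(k)}(0)=0$ for all even $k\geq 2$. Your identity-theorem step is a worthwhile addition: real analyticity alone does not make the Taylor series at $0$ converge on the whole domain (e.g.\ the logistic loss has radius of convergence $\pi$), and the paper only imposes that assumption explicitly later in the same appendix.
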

Since the odd part of $\phi$ is the sum over the terms with odd coefficients of the Taylor series,
our symmetrization method corresponds to the very natural process of keeping the odd coefficients of the original loss and replacing the even coefficients by 0's. Every truncation of the Taylor expansion for the symmetric loss function is also symmetric. This allows approximating any such symmetric loss functions with simpler polynomial symmetric loss functions. 

The decomposition as a sum of an odd and an even function in the binary case makes sense because changing the label amounts to changing the sign of $z$. However, this does not generalize immediately to the multi-class case. When taking the point of view that the odd binary loss functions are the symmetric loss functions and that the even binary loss functions are the class-insensitve loss functions, we can get a decomposition holding in the multi-class case also. 
\subsection{Lipschitz constant and $\beta$-smoothness in the binary case}
Assume that $\phi(z)$ is $\ell$-Lipschitz and $\beta$-smooth. Denote by $\ell_{\mathrm{sym}}$ and $\beta_{\mathrm{sym}}$ the Lipschitz constant and $\beta$-smoothness of the symmetrization $\phi_{\mathrm{sym}}(z)$ of $\phi(z)$. The Lipschitz constant can not be worse:
\[\ell_{\mathrm{sym}}=\sup_z\bigg[\bigg|\frac{\phi'(z)+\phi'(-z)}{2}\bigg|\bigg]\leq \sup_{z}[|\phi'(z)|]=\ell.\]
Similarly, the $\beta$-smoothness of the symmetrization can not get worse:
\[\beta_{\mathrm{sym}}=\sup_z\bigg[\bigg|\frac{\phi''(z)-\phi''(-z)}{2}\bigg|\bigg]\leq \sup_{z}[|\phi''(z)|]=\beta.\]
We will see next that under favorable conditions, $\beta_{\mathrm{sym}}$ could be much smaller than $\beta$. Let $|z|\leq R$ and add the assumption that the Taylor series at 0 of the potential $\phi$ converges on its domain to the function $\phi$ itself.
Since $\phi''_{\mathrm{sym}}(z)=-\phi''_{\mathrm{sym}}(-z)$ and $\phi''_{\mathrm{sym}}(z)$ is continuous on a compact domain, there exists $z^{*}$ such that $\beta_{\mathrm{sym}}=\sup_{z} [|\phi''_{\mathrm{sym}}(z)|]=\phi''_{\mathrm{sym}}(z^{*})$. Considering the Taylor series at $0$, we get
\[\beta_{\mathrm{sym}}=\phi''_{\mathrm{sym}}(z^{*})=\frac{\phi''(z^*)-\phi''(-z^*)}{2}=\sum_{k \, odd}\frac{\phi^{(k+2)}(0)}{k!}(z^*)^k.\]
Therefore, 
\[\beta_{\mathrm{sym}}+\sum_{k \, even}\frac{\phi^{(k+2)}(0)}{k!}(z^*)^k=\sum_{k=0}^{\infty}\frac{\phi^{(k+2)}(0)}{k!}(z^*)^k=\phi''(z^*)\leq \sup_z [|\phi''(z)|]=\beta.\]
We conclude that \[\beta-\beta_{\mathrm{sym}}\geq\max\bigg\{\sum_{k \, even}\frac{\phi^{(k+2)}(0)}{k!}(z^*)^k,0\bigg\}.\]
A simple example of this lower bound is when $z^*=0$. In this case, we get \[\beta-\beta_{\mathrm{sym}}\geq \max\{\phi''(0),0\}.\] When $\phi(z)$ is convex (and $z^*=0$), we get $\beta-\beta_{sym}\geq \phi''(0)\geq 0$.  A second simple example is when all the even coefficients $\phi^{(k+2)}(0)$ satisfy $\phi^{(k+2)}(0)\geq 0$, we can also conclude that $\beta-\beta_{\mathrm{sym}}\geq \phi''(0)$ in this case. A third example is if the series converges absolutely and $R$ is small enough. In this case, the lower bound will be close to $\max\{\phi''(0),0\}$. 

As a concrete example, consider the logistic loss $\phi(z)=\log(1+\exp(-z))$. The $\beta$-smoothness of this loss is $\frac{1}{4}$ and $\phi''(0)=\frac{1}{4}$. We know that the symmetrization of $\phi$ is the unhinged with $\beta$-smoothness equal to $0$. We therefore have exactly $\beta-\beta_{\mathrm{sym}}= \phi''(0)$ in this case.

Assume that $\phi(R)=0$, $\phi(0)\geq 1$, $\phi(z)$ is convex, non-increasing, and as before, its Taylor series at 0 converges on its domain, $\phi(z)$ is $\ell$-Lipschitz  and $\beta$-smooth. In order to make $\phi_{\mathrm{sym}}(z)$ into a surrogate loss upper bounding the $0-1$ loss, consider the affine function of $\phi_{\mathrm{sym}}(z)$ given by \[\phi_{\mathrm{sur}}(z):=\frac{2}{\phi(-R)}\phi_{\mathrm{sym}}(z)+1.\] 
This loss function is non-increasing since $\phi(z)$ is non-increasing. Also, $\phi_{\mathrm{sur}}(0)=1$ and $\phi_{\mathrm{sur}}(R)=0$. Therefore, $\phi_{\mathrm{sur}}(z)$ upper bounds the $0-1$ loss on the domain $|z|\leq R$. The convexity of $\phi(z)$ implies
\[1\leq \phi(0)=\phi\bigg(\frac{R-R}{2}\bigg)\leq \frac{1}{2}\phi(R)+\frac{1}{2}\phi(-R)=\frac{1}{2}\phi(-R).\]
Therefore, \[\frac{2}{\phi(-R)}\leq 1.\]
Denote by $\ell_{\mathrm{sur}}$ and $\beta_{\mathrm{sur}}$ the Lipschitz constant and $\beta$-smoothness of $\phi_{\mathrm{sur}}(z)$. Then,
\[\ell_{\mathrm{sur}}= \frac{2}{\phi(-R)} \ell_{\mathrm{sym}}\leq \ell\]
and 
\[\beta_{\mathrm{sur}}= \frac{2}{\phi(-R)}\beta_{\mathrm{sym}}\leq \frac{2}{\phi(-R)}\bigg(\beta -\max\bigg\{\sum_{k \, even}\frac{\phi^{(k+2)}(0)}{k!}(z^*)^k,0\bigg\}\bigg).\]

In this section, we showed that symmetrization in the binary case has a smoothing effect in the sense of potentially reducing the Lipschitz constant and the constant $\beta$ of $\beta$-smoothness.
\section{Linear hypothesis classes and interpretation as kernel learning}
\label{kernellearning}
Assume that we are in the linear multi-class case with feature map $\psi$. Let $W$ be the weight matrix and $L(z,y)$ the multi-class unhinged loss function. If a bias vector $b$ is present, the weight matrix will be understood as being extended by an  additional column (the vector $b$) and the vector $\psi(x)$ will be understood as having an additional entry. Consider the constrained optimization problem given by minimizing the empirical multi-class unhinged loss function on the training data under a constraint on the Frobenius norm of the weight matrix ($||W||_{FR}\leq r$). This constraint is equivalent to $||W||^2_{FR}-r^2\leq 0$. The Lagrangian is then given by
\begin{center}
    $\frac{1}{N}\sum_{i=1}^NL(W\psi(x_i),y_i) +\lambda(||W||_{FR}^2-r^2)$,
\end{center}
for $\lambda\geq 0$. 
Define the column vector $c_y$ as having $y^{th}$ entry equal to $\frac{C-1}{C}$ and every other entry given by $\frac{-1}{C}$. 
The first order condition on the Lagrangian can then be written as \begin{center}
    $\displaystyle 2\lambda W-\frac{1}{N}\sum_{i=1}^N c_{y_i}\psi(x_i)^T=0$.
\end{center}
Let 
\begin{center}
    $\mu^{unh}_S:=\frac{1}{N}\sum_{i=1}^N c_{y_i}\psi(x_i)^T$.
\end{center}We will refer to $\mu^{unh}_S$ as the unhinged multi-class data centroid. The above computations showed that $\nabla_W \bigg[\frac{1}{N}\sum_{i=1}^{N}L(W\psi(x_i),y_i)\bigg]=-\mu^{unh}_S$, where the gradient is taken with respect to the matrix $W$. The $k^{th}$ row of $-\mu^{unh}_S$ is equal to the gradient of the multi-class unhinged loss with respect to the weight vector connected to the $k^{th}$ output (score for class $k$).
If $\mu^{unh}_S=0$, from the KKT conditions, we get that any $W$ satisfying the constraint is a solution (taking $\lambda =0$). The problem is degenerate in that case. Indeed, the gradient of the multi-class unhinged loss on the training set is then $0$ for any $W$. Assume that $\mu^{unh}_S\neq 0$. Then, from the first order condition on the Lagrangian, we must have $\lambda\neq 0$ and $W=\displaystyle \frac{1}{2\lambda }\mu^{unh}_S$. Furthermore, from the KKT conditions, we must have $||W||_{FR}=r$. Therefore,
\begin{center}
    $W=r\frac{\mu^{unh}_S}{||\mu^{unh}_S||_{FR}}$.
\end{center}
We note that $r$ is only a scaling factor and so the solution is the same for any $r$ from the point of view of classification performance in terms of $0-1$ loss. The quantity $\mu^{unh}_S$ does more than offering the solution to our optimization problem, it actually fully characterizes the loss landscape on training data. Indeed, if two training data sets have the same $\mu^{unh}_S$, then their multi-class unhinged loss functions have the same gradient everywhere and so can differ only by a constant. This can also be seen by verifying with a direct computation that 
\begin{equation}
\label{trace}
    \frac{1}{N}\sum_{i=1}^NL(W\psi(x_i),y_i) =-Trace(\mu^{unh}_SW^T).
\end{equation}

Assume now that the feature map is given by a deep neural network with parameters $\theta$. Denote this feature map by $\psi_{\theta}$ and the unhinged multi-class data centroid by $\mu_{S,\theta}^{unh}$. Furthermore, let $k_{\theta}(x,x'):=\psi_{\theta}(x)^T\psi_{\theta}(x')$ be the kernel given by the standard dot product in the representation space. Substituting $W=r\frac{\mu^{unh}_{S,\theta}}{||\mu^{unh}_{S,\theta}||_{FR}}$ in equation \ref{trace} allows us to write the empirical loss as a function of $\theta$ only. We denote this empirical loss as $L_S(\theta)$. Since $Trace([\mu^{unh}_{S,\theta}][\mu^{unh}_{S,\theta}]^T)=||\mu^{unh}_{S,\theta}||_{FR}^2$, we get
\begin{center}
    $L_S(\theta)=-r||\mu^{unh}_{S,\theta}||_{FR}$.
\end{center}
Minimizing $L_S(\theta)$ is therefore an equivalent problem to maximizing $||\mu^{unh}_{S,\theta}||_{FR}^2$.
\begin{prop}
\label{centroid}
    The squared Frobenius norm of the unhinged multi-class  data centroid satisfies the equality
    \begin{equation}
\label{kernel1}
\displaystyle||\mu^{unh}_{S,\theta}||_{FR}^2 = \frac{1}{N^2}\sum_{i,j}a_{ij}k_{\theta}(x_i,x_j),
\end{equation}
where $a_{ij}$ is equal to $\frac{C-1}{C}$ if $y_i=y_j$ and to $\frac{-1}{C}$ if $y_i \neq y_j$.
\end{prop}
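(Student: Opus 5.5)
The plan is a direct computation from the definition $\mu^{unh}_{S,\theta}=\frac{1}{N}\sum_{i=1}^N c_{y_i}\psi_\theta(x_i)^T$, using the trace form of the Frobenius norm. I will first write
\[
\|\mu^{unh}_{S,\theta}\|_{FR}^2=\mathrm{Trace}\big(\mu^{unh}_{S,\theta}(\mu^{unh}_{S,\theta})^T\big)=\frac{1}{N^2}\sum_{i,j}\mathrm{Trace}\big(c_{y_i}\psi_\theta(x_i)^T\psi_\theta(x_j)c_{y_j}^T\big),
\]
expanding the product of the two sums bilinearly.

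In each term, the middle factor $\psi_\theta(x_i)^T\psi_\theta(x_j)$ is a scalar, namely $k_\theta(x_i,x_j)$. It can therefore be pulled out, leaving $\mathrm{Trace}(c_{y_i}c_{y_j}^T)=c_{y_j}^Tc_{y_i}$. This gives
\[
\|\mu^{unh}_{S,\theta}\|_{FR}^2=\frac{1}{N^2}\sum_{i,j}(c_{y_i}\cdot c_{y_j})\,k_\theta(x_i,x_j),
\]
so it remains to show that $a_{ij}=c_{y_i}\cdot c_{y_j}$.

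The only real work, and it is minor, is evaluating this inner product of the label vectors. There are two cases.

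\emph{Case $y_i=y_j$.} The inner product is
\[
\Big(\tfrac{C-1}{C}\Big)^2+(C-1)\tfrac{1}{C^2}=\tfrac{(C-1)(C-1+1)}{C^2}=\tfrac{C-1}{C}.
\]

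\emph{Case $y_i\neq y_j$.} The two coordinates $y_i$ and $y_j$ each contribute $\tfrac{C-1}{C}\cdot\tfrac{-1}{C}$. The remaining $C-2$ coordinates each contribute $\tfrac{1}{C^2}$. The total is
\[
\tfrac{-2(C-1)+(C-2)}{C^2}=\tfrac{-C}{C^2}=-\tfrac{1}{C}.
\]

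These values match the stated $a_{ij}$, which completes the argument. A cleaner way to see the same thing is to note that $c_y=e_y-\frac{1}{C}\mathbf{1}$. Then $c_y\cdot c_{y'}=\delta_{yy'}-\frac{1}{C}$, because $e_y\cdot\mathbf{1}=1$ and $\mathbf{1}\cdot\mathbf{1}=C$. I would present this as the main step.

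If a bias is included, the statement is unaffected: $\psi_\theta$ simply carries the extra constant entry, and $k_\theta$ is defined using that extended feature vector.
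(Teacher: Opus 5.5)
Your proof is correct and follows essentially the same route as the paper. The paper expands $\|\mu^{unh}_{S,\theta}\|_{FR}^2$ row by row as $\sum_{k}\|\frac{1}{N}\sum_i c^{(k)}_{y_i}\psi_\theta(x_i)\|^2$, which is your trace expansion with the sum over coordinates done last, and then evaluates $\sum_k c^{(k)}_{y_i}c^{(k)}_{y_j}$ by the same two-case computation; your identity $c_y\cdot c_{y'}=\delta_{yy'}-\frac{1}{C}$ from $c_y=e_y-\frac{1}{C}\mathbf{1}$ is just a tidier way to obtain those same coefficients.
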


Equation \ref{kernel1} gives a direct and precise interpretation of training neural networks with the multi-class unhinged loss as a form of kernel learning. When $x_i$ and $x_j$ share the same label, the coefficient $a_{ij}$ is positive, and the objective aims to increase the similarity (or alignment) between these points. Conversely, when two points do not share the same label, the coefficient $a_{ij}$ is negative, and the objective seeks to decrease the similarity between the points.

\citet{DBLP:conf/sdm/DingZZLT022} obtained a multi-class data centroid by decomposing the mean-squared error loss. Our multi-class data centroid is different from \citet{DBLP:conf/sdm/DingZZLT022}. They consider (the transpose of)
\begin{center}
$\mu^{sq}_S:=\frac{1}{ N}\sum_{i=1}^N y_i\psi(x_i)^T$,
\end{center}
where $y_i$ is the one-hot encoding for the class. Our method involves
the vector $c_{y_i}$ instead of the vector $y_i$. The relationship between the unhinged multi-class data centroid and the mean squared multi-class data centroid is given by the following equation: $\mu^{unh}_S=\mu^{sq}_S-\frac{1}{C }\bigg(\frac{1}{N}\sum_{i=1}^N \textbf{1}\psi(x_i)^T\bigg)$, where $\textbf{1}$ is a column vector with all entries equal to $1$. We can think of $\mu_S^{unh}$ as a corrected version of $\mu_S^{sq}$. 

\section{Proofs}
\label{sec:proofs}
\noindent\textbf{Proof of Proposition \ref{unique}:} Define $L^{sym}(z,y)$ by the following formula:
\begin{equation*}
\label{sym2}
    \displaystyle L^{sym}(z,y):=L(z,y)-\frac{1}{C}\sum_{k=1}^C L(z,k).
\end{equation*}
The loss function $L^{sym}(z,y)$ is symmetric and $L(z,y)-L^{sym}(z,y)$ is class-insensitive. This shows the existence of the decomposition. For uniqueness, suppose $L=L^{sym}_1+L^{ins}_1=L^{sym}_2+L^{ins}_2$. That is, we have two decompositions of $L$ as a sum of a symmetric and a class-insensitive loss function. Then, 
\begin{center}
    $L^{sym}_1-L^{sym}_2=L^{ins}_2-L^{ins}_1$.
\end{center}
 Since $L^{sym}_1-L^{sym}_2$ is symmetric and $L^{ins}_2-L^{ins}_1$ is class-insensitive, they must be both symmetric and class-insensitive. The only loss functions that are both class-insensitive and symmetric are constant. Indeed, if an arbitrary loss function $L'(z,y)$ is both symmetric and class-insensitive, then
\begin{equation*}
    constant=\displaystyle\sum_{y=1}^C L'(z,y)= CL'(z,k)
\end{equation*}
for any $1\leq k \leq C$ and any $z\in \mathbb{R}^C$, and therefore $L'(z,y)$ is constant. We conclude that $L^{sym}_1$ is equal to  $L^{sym}_2$ up to an additive constant and also $L^{ins}_1$ is equal to $L^{ins}_2$ up to an additive constant.$\blacksquare$
\bigskip

In order to prove the uniqueness result among convex functions for the multi-class unhinged loss, we start by proving uniqueness among linear functions. The invariance to permutations property is crucial and the next Lemma proves some constraints that must hold on the coefficients of a linear loss satisfying the invariance to permutations property. 

\begin{lem}
\label{helpingLemma}
    Assume that a linear loss $L(z,y)=\sum_{k=1}^C a_k(y) z_k$ satisfies the invariance to permutations property. Then,
    \begin{enumerate}
        \item [i)] $a_y(y)=a_k(k)$ for all $k,y$.
        \item [ii)] $a_k(y)=a_y(k)$ for all $k,y$.
        \item[iii)] $a_k(y)=a_{k'}(y)$ if $k\neq y$ and $k'\neq y$.
    \end{enumerate}
\end{lem}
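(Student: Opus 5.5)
The plan is to turn the invariance property into an identity on coefficients, then apply it with well-chosen permutations.

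First, translate invariance into a coefficient identity. For a permutation $\tau$, $\tau(z)_k = z_{\tau^{-1}(k)}$, so
\[
L(\tau(z),\tau(y))=\sum_{k=1}^C a_k(\tau(y))\, z_{\tau^{-1}(k)}=\sum_{j=1}^C a_{\tau(j)}(\tau(y))\, z_j ,
\]
after reindexing with $j=\tau^{-1}(k)$. Invariance says this equals $\sum_j a_j(y) z_j$ for every $z\in\mathbb{R}^C$. Taking $z=e_j$ (the $j$-th standard basis vector) lets us compare coefficients, which gives the master identity
\[
a_{\tau(j)}(\tau(y))=a_j(y)\quad\text{for all } j,y \text{ and all permutations } \tau .
\]

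Each of the three claims then follows by one choice of permutation.

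For i), take $j=y$ and let $\tau$ be the transposition swapping $y$ and $k$ (or the identity if $k=y$). Then $a_k(k)=a_y(y)$.

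For ii), take the same transposition of $y$ and $k$, but now with $j=k$. This gives $a_{\tau(k)}(\tau(y))=a_y(k)$, which equals $a_k(y)$ by the master identity.

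For iii), let $k,k'\neq y$ with $k\neq k'$ (the case $k=k'$ is trivial). Take $\tau$ to be the transposition of $k$ and $k'$, which fixes $y$. The identity with $j=k$ gives $a_{k'}(y)=a_k(y)$.

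The only delicate step is the first one. The reindexing has to be done carefully: $\tau^{-1}$ acts on the indices of $z$ while $\tau$ acts on the label, and it is easy to get the direction wrong. Once the identity $a_{\tau(j)}(\tau(y))=a_j(y)$ is correctly established, everything else is immediate. Transpositions are self-inverse, so the direction does not matter for the choices above.
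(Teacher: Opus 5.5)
Your proof is correct and is essentially the paper's argument. You use the same transpositions (swapping $k$ and $y$ for i) and ii), swapping $k$ and $k'$ while fixing $y$ for iii)) and the same basis-vector comparisons; the only difference is that you derive the single identity $a_{\tau(j)}(\tau(y))=a_j(y)$ first and then specialize it, whereas the paper does each comparison separately.
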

\noindent \textbf{Proof:}
\begin{enumerate}
    \item [i)] For given $k$ and $y$, consider a permutation $\tau$ switching $k$ and $y$. From the invariance to permutations property, $L(z,y)=L(\tau(z),\tau(y))=L(\tau(z),k)$ for all $z$. Pick $z=e_y$. Then,
    \begin{center}
$a_y(y)=L(e_y,y)=L(\tau(e_y),k)=L(e_k,k)=a_k(k).$
     \end{center}
    \item[ii)] Consider $\tau$ as above, but now pick $z=e_k$. Then,
    \begin{center}
    $a_k(y)=L(e_k,y)=L(\tau(e_y),\tau(k))=L(e_y,k)=a_y(k).$
     \end{center}
    \item[iii)] Fix $y$ and let $k\neq y$ and $k'\neq y$. Consider any permutation $\tau$ with  fixed point $y$ satisfying $\tau(k)=k'$ and $\tau(k')=k$. From the invariance to permutations property, $L(z,y)=L(\tau(z),\tau(y))=L(\tau(z),y)$.  Since any two linear functions equal everywhere must have the same coefficients, from the equality $L(\tau(z),y)=L(z,y)$ and noting that the coefficient of $z_k$ in $L(z,y)$ is $a_k(y)$ and the coefficient of $z_k$ in $L(\tau(z),y)$ is $a_{k'}(y)$, we conclude that $a_k(y)=a_{k'}(y)$.
\end{enumerate}

\bigskip

\begin{prop}
\label{Linear}
   The multi-class unhinged loss is the unique (up to a positive multiplicative constant) non-trivial, non-increasing, linear multi-class symmetric loss function that satisfies the property of  invariance to permutations.
\end{prop}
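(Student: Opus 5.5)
Let $L(z,y)=\sum_{k=1}^C a_k(y)z_k$ be linear, symmetric, non-increasing, non-trivial and invariant to permutations. I would first use Lemma \ref{helpingLemma} to reduce the coefficients to two numbers. By i), $a_y(y)=a$ for a common value $a$. By iii), for fixed $y$ all off-diagonal coefficients $a_k(y)$, $k\neq y$, equal some $b(y)$. By ii), $b(y)=a_k(y)=a_y(k)=b(k)$ for any $k\neq y$. Hence $b(y)=b$ is independent of $y$ (when $C\geq 2$). Therefore
\begin{equation*}
L(z,y)=a z_y+b\sum_{k\neq y}z_k=(a-b)z_y+b\sum_{k=1}^C z_k .
\end{equation*}

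Next I would impose symmetry. Summing over $y$ gives
\begin{equation*}
\sum_{y=1}^C L(z,y)=(a-b+Cb)\sum_{k=1}^C z_k .
\end{equation*}
This must be constant in $z$, which forces $a+(C-1)b=0$, that is, $b=-a/(C-1)$. Substituting, I get $a-b=aC/(C-1)$, and hence
\begin{equation*}
L(z,y)=\frac{aC}{C-1}\Big(z_y-\frac{1}{C}\sum_{k=1}^C z_k\Big).
\end{equation*}
This is $-\frac{aC}{C-1}$ times the multi-class unhinged loss \ref{un}.

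The sign of the constant comes from the remaining conditions. Non-triviality rules out $a=0$, since then $L\equiv 0$. Monotonicity requires the coefficient of $z_y$ to be $\leq 0$, i.e. $a\leq 0$, so $a<0$ and the multiplicative constant $-aC/(C-1)$ is positive. Conversely, \ref{un} is linear, symmetric (its sum over $y$ is $0$), invariant to permutations, and non-increasing in $z_y$ because its coefficient is $-(C-1)/C<0$. This gives existence.

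The step needing the most care is the reduction of the off-diagonal coefficients to a single constant $b$. Part iii) only equalizes coefficients within one row, so part ii) is needed to pass between rows. For $C=2$, iii) is vacuous and ii) alone gives $a_2(1)=a_1(2)$, so the same conclusion holds. I would also note that "non-trivial" should be read as excluding the zero loss, or, if affine constants are allowed, constant losses. Additive constants are absorbed in the "up to constants" convention.
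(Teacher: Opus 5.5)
Your proof is correct and follows essentially the paper's route: Lemma \ref{helpingLemma} constrains the coefficients, the symmetry condition forces the coefficient sums to vanish, and monotonicity together with non-triviality fixes the sign of the multiplicative constant. The only difference is the order of steps. You first collapse the coefficients to two parameters $a,b$, using part ii) to pass between rows, and then impose symmetry. The paper instead imposes symmetry as vanishing column sums, uses ii) to turn these into vanishing row sums, and only then applies i) and iii).
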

\noindent \textbf{Proof:} It is easy to verify that the multi-class unhinged loss function is a symmetric, non-increasing, linear loss function that satisfies the invariance to permutations property.  We now show that it is the unique such loss function up to a multiplicative constant. Let $L(z,y)=\sum_{k=1}^C a_k(y) z_k$. Using the symmetry condition and rearranging terms leads to
\begin{equation*}
    \displaystyle \sum_{k=1}^C \bigg(\sum_{y=1}^C a_k(y)\bigg) z_k = constant,
\end{equation*}
for all $z$. If a linear function is constant, all its coefficients must be $0$ and so
\begin{equation*}
    \displaystyle\sum_{y=1}^C a_k(y)=0,
\end{equation*}
for all $k$. Using Lemma \ref{helpingLemma} $ii)$ leads to $\sum_{y=1}^C a_y(k)=0$, for all $k$. For convenience, we change the names of the indices in the previous equality and write
\begin{equation*}
    \displaystyle\sum_{k=1}^C a_k(y)=0,
\end{equation*}
for all $y$. From the assumption that $L(z,y)$ is non-increasing, we must have $a_y(y)\leq 0$. If $a_y(y)=0$, then $\sum_{k\neq y}a_k(y)=0$. But, from Lemma \ref{helpingLemma} $iii)$, we must then have $(C-1) a_k(y)=0$ for any $k$. The loss $L(z,y)$ would then be identically $0$. Therefore, for the loss to be non-trivial, we must have $a_y(y)< 0$. Since we consider the loss up to a multiplicative constant, we can assume that $a_y(y)=-1$ for all $y$ (this holds for all $y$ from Lemma \ref{helpingLemma} $i)$). Finally, from Lemma \ref{helpingLemma} $iii)$, $a_k(y)=\frac{1}{C-1}$ if $k\neq y$. Consequently,
\begin{equation*}
L(z,y)=-z_y+\frac{1}{C-1}\sum_{k\neq y} z_k=\frac{C}{C-1}\bigg(-z_y+\frac{1}{C}\sum_{k=1}^C z_k\bigg).
\end{equation*}
This concludes the proof showing that $L(z,y)$ is equal to the multi-class unhinged loss function up to a multiplicative constant.$\blacksquare$

\begin{rem}
  Without the property of invariance to permutations, the uniqueness result would not be true. Indeed, consider the following example with $3$ classes:
  \begin{center}
  $L(z,y)=
  \begin{cases}
  -z_1+z_2+z_3 & \text{if y=1} \\
    -z_2+z_1 & \text{if y=2} \\
    -z_3 & \text{if y=3}
  \end{cases}$
      
  \end{center}
  This loss function is convex (actually linear) for all $y$, non-increasing and symmetric. However, it is not equivalent to the multi-class uninged loss function.
\end{rem}

\noindent \textbf{Proof of Theorem \ref{convex}:} Assume that $L(z,y)$ is a convex function of $z$. From the symmetry condition, we get
\begin{center}
    $L(z,y)=constant-\sum_{k\neq y} L(z,k)$.
\end{center}

Since a sum of convex functions is convex, $-\sum_{k\neq y} L(z,k)$ is concave. It follows that $L(z,y)$ is both convex and concave. The only functions that are both convex and concave are affine functions. Therefore, under the assumptions of the theorem, it follows from Proposition \ref{Linear} that $L(y,z)$ must be equal to the multi-class unhinged loss function up to an additive and a multiplicative constant. $\blacksquare$

\bigskip

 \noindent \textbf{Proof of Proposition \ref{linApprox}:} We first need to show that $\big[\nabla_z L(z,y)|_{z=z'}\big]^{T}z$ satisfies the property of invariance to permutations. Let $\tau$ be a permutation and $P$ the corresponding permutation matrix. From the chain rule and the invariance to permutations property for $L(z,y)$, we get
\begin{align*}
    \big[\nabla_z L(z,\tau(y))|_{z=z'}\big]^{T}\tau(z)&=\big[\nabla_z L(\tau^{-1}(z),y)|_{z=z'}\big]^{T}\tau(z)\\
    &=\big[\nabla_z L(z,y)|_{z=\tau^{-1}(z')}\big]^{T}P^{-1}\tau(z)\\
    &= \big[\nabla_z L(z,y)|_{z=z'}\big]^{T}z,
\end{align*}
where the last line is true since $\tau^{-1}(z')=z'$ when all the components of $z'$ are equal. We conclude that $\big[\nabla_z L(z,y)|_{z=z'}\big]^{T}z$ satisfies the property of invariance to permutations.

We now need to show that $\big[\nabla_z L(z,y)|_{z=z'}\big]^{T}z$ satisfies the symmetry condition. Differentiating both sides of the symmetry condition for $L(z,y)$ with respect to $z$ at $z'$ leads to
\begin{equation*}
     \displaystyle\sum_{k=1}^C \nabla_z L(z,k)|_{z=z'}=0.
\end{equation*}
Taking the dot product with $z$ on both sides leads to the conclusion that $\big[\nabla_z L(z,y)|_{z=z'}\big]^{T}z$ is symmetric. From the uniqueness result for the multi-class unhinged loss, the proposition follows if $\big[\nabla_z L(z,y)|_{z=z'}\big]^{T}z$ is non-trivial, that is, if $\nabla_z L(z,y)|_{z=z'}\neq 0$. 
$\blacksquare$

\begin{ex}
    When a differentiable loss function is globally symmetric, it is also locally symmetric everywhere. However, it need not be equivalent to the multi-class unhinged loss everywhere. Indeed, even if the loss function satisfies the property of invariance to permutations, the linear approximation might not satisfy the same property everywhere. An example is the MAE in three variables:
    \begin{center}
        $L(z,y)= \displaystyle
   2-2\frac{\exp{(z_y)}}{\sum_{k=1}^{3}\exp{(z_k)}} $.
    \end{center}
    If we let $l(z,y)=\big[\nabla_z L(z,y)|_{z=z'}\big]^{T}z$ with $z'=(1,0,0)$, we get
    \begin{center}
        $l(z,y)=\frac{2}{(e+2)^2}\begin{cases}
   (-2e,e,e)\cdot z&  \text{if y=1} \\
     (e, -e-1, 1)\cdot z& \text{if y=2} \\
      (e,1,-e-1)\cdot z  & \text{if y=3}
  \end{cases}$.
    \end{center}
    This loss is symmetric as it should be. However, it does not satisfy the property of invariance to permutations and it is not equivalent to the multi-class unhinged loss function.
\end{ex}
\noindent\textbf{Proof of Proposition \ref{linapproxprop}:} Denote by $R_y(z)$ the remainder for the linear approximation of $\phi(z,y)$ at $0$. Then,
\begin{align*}
    \displaystyle L_D(f^{*}_{\phi})-L_D(f^{*}_{L})&=\displaystyle \mathbb{E}_{x,y\sim D}\bigg[ L(f^{*}_{\phi}(x),y)-L(f^{*}_{L}(x),y)\bigg]\\
    &=\displaystyle \mathbb{E}_{x,y\sim D}\bigg[ \phi(f^{*}_{\phi}(x),y)-\phi(0,y)-R_y(f^{*}_{\phi}(x))-L(f^{*}_{L}(x),y)\bigg]\\
    &=\displaystyle \mathbb{E}_{x,y\sim D}\bigg[ \phi(f^{*}_{\phi}(x),y)\bigg]-\mathbb{E}_{x,y\sim D}\bigg[\phi(0,y)+R_y(f^{*}_{\phi}(x))+L(f^{*}_{L}(x),y)\bigg]\\
    &\leq \displaystyle \mathbb{E}_{x,y\sim D}\bigg[ \phi(f^{*}_{L}(x),y)\bigg]-\mathbb{E}_{x,y\sim D}\bigg[\phi(0,y)+R_y(f^{*}_{\phi}(x))+L(f^{*}_{L}(x),y)\bigg]\\
    &= \displaystyle \mathbb{E}_{x,y\sim D}\bigg[ \phi(f^{*}_{L}(x),y)-\phi(0,y)-R_y(f^{*}_{\phi}(x))-L(f^{*}_{L}(x),y)\bigg]\\
    &=\displaystyle \mathbb{E}_{x,y\sim D}\bigg[ R_y(f^{*}_{L}(x))-R_y(f^{*}_{\phi}(x))\bigg]\\
    &\leq 2 \sup_{z,y}|R_y(z)|. 
\end{align*}
The proof is completed by exploiting $\beta$-smoothness to bound the remainder and by bounding the size of the outputs of the neural network:
\begin{center}
    $|R_y(z)|\leq \frac{\beta}{2}||z||^2\leq \frac{\beta}{2}(R c^{(l-1)} r)^2$.
\end{center}$\blacksquare$

\noindent \textbf{Proof of Proposition \ref{centroid}:} We  have
\begin{eqnarray*}
    ||\mu^{unh}_{S,\theta}||_{FR}^2&=&\sum_{k=1}^C||\frac{1}{N}\sum_{i=1}^N c_{y_i}^{(k)}\psi_{\theta}(x_i)||^2\\
    &=& \sum_{k=1}^C\bigg(\frac{1}{N}\sum_{i=1}^N c_{y_i}^{(k)}\psi_{\theta}(x_i)^T\bigg)\bigg(\frac{1}{N}\sum_{j=1}^N c_{y_j}^{(k)}\psi_{\theta}(x_j)\bigg)\\
    &=& \frac{1}{N^2}\sum_{k=1}^C \sum_{i,j} c_{y_i}^{(k)}c_{y_j}^{(k)}k_{\theta}(x_i,x_j)\\
    &=& \frac{1}{N^2} \sum_{i,j}k_{\theta}(x_i,x_j) \bigg(\sum_{k=1}^C c_{y_i}^{(k)}c_{y_j}^{(k)}\bigg).\\
\end{eqnarray*}
The quantity $\sum_{k=1}^C c_{y_i}^{(k)}c_{y_j}^{(k)}$ can be easily computed. If $y_i=y_j$ then,
\begin{center}
    $\displaystyle\sum_{k=1}^C c_{y_i}^{(k)}c_{y_j}^{(k)}=\bigg(\frac{C-1}{C}\bigg)^2+(C-1)\frac{1}{C^2}=\frac{C-1}{C}$.
\end{center}
If $y_i\neq y_j$ then,
\begin{center}
    $\displaystyle\sum_{k=1}^C c_{y_i}^{(k)}c_{y_j}^{(k)}=2\bigg(\frac{C-1}{C}\bigg)\bigg(\frac{-1}{C}\bigg)+(C-2)\frac{1}{C^2}=\frac{-1}{C}$.
\end{center}
Let $a_{ij}$ be equal to $\frac{C-1}{C}$ if $y_i=y_j$ and $\frac{-1}{C}$ if $y_i \neq y_j$. 
We then get
\begin{equation*}
\label{kernel}
\displaystyle||\mu^{unh}_{S,\theta}||_{FR}^2 = \frac{1}{N^2}\sum_{i,j}a_{ij}k_{\theta}(x_i,x_j).
\end{equation*}
$\blacksquare$

\section{Conditional data-dependent Dirichlet priors}
\label{cond}
In the case where the original loss function is the negative log-likelihood, our symmetrization method can be interpreted as a form of regularization induced from using Dirichlet priors on the outputs of the network. We explain precisely how below.

We consider neural networks having as outputs  probability vectors on $C$ classes. That is, for a neural network with parameters $\theta$, the output of the neurak network on input $x$ is the conditional distribution $p(y|x,\theta)$.
Let $\Delta_C:=\{\pi=(\pi_1,\cdots,\pi_C)\, \, | \, \, \pi_i \geq 0 \, \, and \, \,  \sum_{i=1}^C\pi_i=1 \}$ be the probability simplex in dimension $C$. A neural network $f_{\theta}$ is then a function
\begin{center}
    $f_{\theta}:\mathbb{R}^d \longrightarrow \Delta_C$.
\end{center}
Suppose that we have a training set of $n$ i.i.d. pairs $(x_i,y_i)$ and denote by $X$ the $d\times n$ matrix obtained from aggregating the $n$ column vectors $x_i$. Also denote by $Y$ the column vector of training labels. In a Bayesian treatment, we would be interrested in the posterior distribution $p(\theta\, | \, X,Y)$. From Bayes rule, we get
\begin{center}
$p(\theta\, | \, X,Y) \propto p(Y\, | X,\theta)p(\theta\, |\, X)$.
\end{center}
It is commonly assumed that the prior is chosen completely independently of the training data, that is $p(\theta\, |\, X)=p(\theta)$. However, it is also possible to maintain the dependency on $X$, leading to the notion of a data-dependent prior. This data-dependent prior depends only on the observed covariates X and not on the observed response variables $Y$. 

We want to define $p(\theta\, | \, X)$. Each $\theta$ represents a neural network $f_{\theta}$. Since we have access to $X$, we can look at $f_{\theta}(x)\in \Delta_C$ for $x$ in the training examples to define the prior. In Bayesian statistics, the Dirichlet distribution of order $C$ is often used as a distribution over $\Delta_C$ since it is the conjugate prior to the categorical distribution. It is defined by the density function
\begin{center}
    $\displaystyle g(\pi \, ; \alpha_1,\cdots, \alpha_C)=constant \times \prod_{i=1}^C \pi_i^{\alpha_i-1}$,
\end{center}
where the parameters $\alpha_i$ satisfy $\alpha_i> 0$ for all $i$. If $\pi\in\Delta_C$ is distributed according to a Dirichlet distribution with parameters $\alpha=(\alpha_1,\cdots, \alpha_C)$, we will write $\pi \sim Dir(\alpha)$. A very natural first step to define our data-dependent prior $p(\theta\, |\, X)$ is to let $f_\theta(x_i)\sim Dir(\alpha(x_i))$ for each $x_i$ in the training set and where $\alpha(x)$ is a function of $x$. We then need to define a joint distribution over the vector $(f_\theta(x_1),\cdots, f_\theta(x_n))$. We will simply choose to have the $f_\theta(x_i)$'s  mutually independent. We then have a joint distribution over the outputs of $f_\theta$ on the training data. This does not lead immediately to a distribution on $\theta$ however since many different $\theta$'s can have the same vector of outputs $(f_\theta(x_1),\cdots, f_\theta(x_n))$. We define an equivalence class on $\theta$ as follows: 
\begin{center}
$[\theta]_X=[\theta']_X$ if and only if $f_{\theta}(x_i)=f_{\theta'}(x_i)$ for all $1\leq i\leq n$.
\end{center}
So far, we have defined a distribution $p([\theta]_X\, | \, X)$. It is given by a product of Dirichlet distributions (to be technically correct, we have to take the restriction of this product of Dirichlet to the subset of $\Delta_C^n$ that can be realized with the hypothesis class $\{f_{\theta}\}$). We can then write
\begin{center}
    $\displaystyle p(\theta\, | \, X)=\int_{[\theta']_X}p(\theta\, | \, [\theta']_X, X) p([\theta']_X\, | \, X)d[\theta']_X=p(\theta\, | \, [\theta]_X, X) p([\theta]_X\, | \, X)$.
\end{center}
We are therefore left with defining a distribution for $\theta$ inside of its equivalence class i.e. $p(\theta\, | \, [\theta]_X, X)$. A possible example would be to choose a uniform distribution. This would lead to $p(\theta\, | \, [\theta]_X, X)=\frac{1}{m([\theta]_X)}$, where $m([\theta]_X)$ is the measure of the set $\{\theta' \, s.t. \, [\theta']_X=[\theta]_X\}$.
Up to an additive constant the negative log posterior is then given by
\begin{center}
    $\displaystyle \sum_{i=1}^n-\log(p(y_i\, | \, x_i,\theta))+\sum_{i=1}^n\sum_{k=1}^C-(\alpha_k(x_i)-1)\log(p(k\, | \, x_i,\theta))-\log(p(\theta\, | \, [\theta]_X, X))$.
\end{center}
If we denote by $l(h(x),y)$ the negative log likelihood loss ($l(h(x),y)=-\log(p(y\, | \, h(x)))$, where $h(x)$ is the score vector) and if we drop the extra regularization term $-\log(p(\theta\, | \, [\theta]_X, X))$, then the quantity above is the sum over all examples of 

\begin{center}
    $\displaystyle l_{Dir}(h(x),y):=l(h(x),y)+\sum_{k=1}^C(\alpha_k(x)-1)l(h(x),k)$.
\end{center}

\begin{lem}
\label{dir}
    Assume that $\alpha_k(x)=\alpha$ is constant. Then, the loss $l_{Dir}$ is symmetric if and only if $\alpha=\frac{C-1}{C}$ or $l$ is already symmetric.
\end{lem}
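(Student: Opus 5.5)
With $\alpha_k(x)=\alpha$ constant, the plan is to rewrite $l_{Dir}$ as
\[
l_{Dir}(z,y)=l(z,y)+(\alpha-1)S(z),\qquad S(z):=\sum_{k=1}^C l(z,k).
\]
The correction term $(\alpha-1)S(z)$ does not depend on $y$, so it is class-insensitive. We then sum over the labels and test the symmetry condition directly:
\[
\sum_{y=1}^C l_{Dir}(z,y)=S(z)+C(\alpha-1)S(z)=\bigl(1+C(\alpha-1)\bigr)S(z).
\]
By definition, $l_{Dir}$ is symmetric exactly when this expression is constant in $z$.

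For the ``if'' direction there are two cases. If $1+C(\alpha-1)=0$, the sum is identically $0$, so $l_{Dir}$ is symmetric. Solving gives $\alpha=1-\frac{1}{C}=\frac{C-1}{C}$. In this case $l_{Dir}(z,y)=l(z,y)-\frac{1}{C}S(z)$, which is exactly $l^{sym}$ from Proposition~\ref{unique}; this ties the Dirichlet prior interpretation to the symmetrization operator. If instead $l$ is already symmetric, then $S(z)$ is constant, so the sum above is constant for every $\alpha$.

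For the ``only if'' direction, suppose $l_{Dir}$ is symmetric and $\alpha\neq\frac{C-1}{C}$. Then the coefficient $1+C(\alpha-1)$ is nonzero. Dividing by it shows that $S(z)$ is constant, i.e.\ $l$ is symmetric.

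I do not expect a real obstacle here; the argument is essentially bookkeeping. The one point needing care is the case split on whether the scalar factor $1+C(\alpha-1)$ vanishes, and checking that the positivity requirement $\alpha>0$ of the Dirichlet parameters is met by $\frac{C-1}{C}$, which holds for $C\geq 2$.
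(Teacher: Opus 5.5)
Your proposal is correct and follows the paper's proof: both factor $\sum_{y} l_{Dir}(z,y)=\bigl(1+C(\alpha-1)\bigr)\sum_{y} l(z,y)$ and then split on whether the scalar factor vanishes. You also write out the ``if'' direction and the identification of the $\alpha=\frac{C-1}{C}$ case with $l^{sym}$, both of which the paper leaves implicit or states only after the lemma.
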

\noindent \textbf{Proof:} If $\alpha_k(x)=\alpha$, we have
\begin{center}
   $\displaystyle\sum_{y=1}^C l_{Dir}(h(x),y)=\big[1+C(\alpha-1)\big]\sum_{y=1}^C l(h(x),y)$.
\end{center}
Therefore, if $l$ is not already symmetric, we must have $1+C(\alpha-1)=0$ for $l_{Dir}$ to be symmetric. That is, we must have $\alpha=\frac{C-1}{C}$.$\blacksquare$

The special case of $l_{Dir}$ with $\alpha_k(x)=\frac{C-1}{C}$ leads to the same loss as $l^{sym}$, that is, the symmetrization of $l$.
An illustration for the binary case (Beta distributions) is given in Figure \ref{beta}. The added prior encourages more confident outputs. 
\begin{figure}[ht]
\vskip 0.2in
  \begin{center}
  \includegraphics[width=0.4\linewidth]{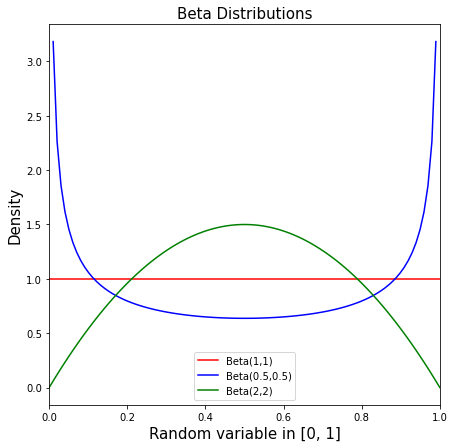}
\caption{The added regularization (induced from a $Beta(\frac{1}{2},\frac{1}{2})$ prior in the binary case) favours probability vectors with less entropy.}
\label{beta}
\end{center}
\vskip -0.2in
\end{figure}

\section{Regression}
The present paper focused on extending decomposition, symmetrization and the binary unhinged loss function to the multi-class case. It is also possible to extend these ideas to regression. Suppose that we are now trying to predict a continuous variable $y\in \mathbb{R}$ and that the data collection process is noisy. The training data comes from a different distribution (corrupted distribution, cheaper to obtain samples from) than the test data (clean distribution). A formalization is given below:
\begin{de}
Consider the problem of regression.
Define the corrupted distribution $\overline{D}$ to have the same marginal distribution as $D$ (that is $\overline{D}_x=D_x$) but with conditional distribution $\overline{D}_{y|x}$ given by 
\begin{center}
$p q_x(y)+(1-p)D_{y|x}$,
\end{center}
where $0\leq p< 1$ and $q_x(y)$ is the corruption distribution. Two examples are $\mathcal{N}(\mu_x,\sigma_x^2)$ (a Gaussian distribution with mean $\mu_x$ and variance $\sigma_x^2$) and a uniform distribution (when the domain of $y$ is a bounded interval in $\mathbb{R}$).
\end{de}

\begin{lem}
Let $q_x(y)$ be the density for the corruption distribution.
    Define 
    \begin{center}
    \(\gamma_h(x)=\int q_x(y) l(h(x),y)dy\)
    \end{center} and $\Gamma_h=\mathbb{E}_D \gamma_h(x)$.
Then, for any $h\in\mathcal{H}$, 
\begin{center}
$L_{\overline{D}}(h)=p\Gamma_h +(1-p)L_D(h)$.   
\end{center}

\end{lem}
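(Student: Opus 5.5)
The plan is to compute the corrupted risk directly by conditioning on $x$, exactly as in the multi-class derivation of Appendix \ref{uni}, with sums over labels replaced by integrals over $y$. First I will write
\[
L_{\overline{D}}(h)=\mathbb{E}_{x\sim \overline{D}_x}\,\mathbb{E}_{y\sim \overline{D}_{y|x}}\big[l(h(x),y)\big],
\]
and use the definition of the corrupted distribution, whose $x$-marginal is $\overline{D}_x=D_x$, so the outer expectation can be taken over $D_x$.

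Next, for fixed $x$, the conditional law $\overline{D}_{y|x}$ is the mixture $p\,q_x+(1-p)D_{y|x}$. Expectation is linear in the underlying measure, so the inner expectation splits as
\[
\mathbb{E}_{y\sim \overline{D}_{y|x}}\big[l(h(x),y)\big]=p\int q_x(y)\,l(h(x),y)\,dy+(1-p)\,\mathbb{E}_{y\sim D_{y|x}}\big[l(h(x),y)\big]=p\,\gamma_h(x)+(1-p)\,\mathbb{E}_{y\sim D_{y|x}}\big[l(h(x),y)\big].
\]
Taking $\mathbb{E}_{x\sim D_x}$ and using linearity again, the first term becomes $p\,\mathbb{E}_D\gamma_h(x)=p\Gamma_h$. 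The second term becomes $(1-p)L_D(h)$ by the tower property, since $D$ is $D_x$ combined with $D_{y|x}$.

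The only point needing care is justifying the interchange and splitting of integrals. I will assume that $l(h(x),\cdot)$ is integrable under both $q_x$ and $D_{y|x}$, and that the resulting functions of $x$ are $D_x$-integrable. This is implicit in $L_D(h)$ and $\Gamma_h$ being well defined. With those integrability assumptions, Fubini–Tonelli and linearity of the integral make everything routine. Alternatively, if $l\geq 0$, Tonelli applies without further assumptions.
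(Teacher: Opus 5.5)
Your proposal is correct and follows essentially the same approach as the paper: both write the corrupted risk as a $D_x$-expectation of the mixture conditional $p\,q_x+(1-p)D_{y|x}$ and split it by linearity into $p\,\mathbb{E}_{x\sim D_x}\gamma_h(x)=p\Gamma_h$ and $(1-p)L_D(h)$. Your integrability remarks (Fubini--Tonelli, or Tonelli when $l\geq 0$) are a harmless refinement of a point the paper leaves implicit.
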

\noindent \textbf{Proof:} 
We have 
\begin{eqnarray*}
   L_{\overline{D}}(h)
   &=&p\bigg[\mathop{\mathbb{E}}_{x\sim D_x }\int q_x(y) l(h(x),y)dy\bigg]+(1-p)L_D(h)\\
   &=&p\mathop{\mathbb{E}}_{x\sim D_x }\gamma_h(x)+ (1-p)L_D(h).
\end{eqnarray*}
\begin{de}
    We say that a regression loss function satisfies the continuous symmetry condition with respect to density $q_x(y)$ if 
    \begin{center}
        $\int q_x(y) l(h(x),y)dy=constant$.
    \end{center}
\end{de}
\begin{coro}
If $l$ satisfies the continuous symmetry condition with respect to $q_x(y)$ then minimizing $L_{\overline{D}}(h)$ is equivalent to minimizing $L_{D}(h)$.
\end{coro}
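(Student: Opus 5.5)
The statement to prove is the Corollary at the end: if $l$ satisfies the continuous symmetry condition with respect to $q_x(y)$, then minimizing $L_{\overline{D}}(h)$ is equivalent to minimizing $L_D(h)$.

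The plan is to follow the argument used for the symmetry condition in Appendix \ref{uni}, now starting from the decomposition lemma just proved. That lemma gives, for every $h\in\mathcal{H}$,
\[
L_{\overline{D}}(h)=p\,\Gamma_h+(1-p)L_D(h), \qquad \Gamma_h=\mathbb{E}_{x\sim D_x}\gamma_h(x).
\]
The first step is to use the continuous symmetry condition. Read as holding for all $h$ and all $x$, it states that $\gamma_h(x)=\int q_x(y)\,l(h(x),y)\,dy$ equals a fixed constant $K$, independent of both $h$ and $x$. Taking the expectation over $D_x$ then gives $\Gamma_h=K$ for every $h\in\mathcal{H}$.

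Substituting this into the decomposition yields
\[
L_{\overline{D}}(h)=pK+(1-p)L_D(h).
\]
Since $0\le p<1$, we have $1-p>0$. So $L_{\overline{D}}$ is a strictly increasing affine function of $L_D$, with the same coefficients for every $h$. Hence for any $h,h'\in\mathcal{H}$, $L_{\overline{D}}(h)\le L_{\overline{D}}(h')$ if and only if $L_D(h)\le L_D(h')$. It follows that the two risks have the same set of minimizers over $\mathcal{H}$, and that one infimum is attained exactly when the other is.

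The only point needing care is the interpretation of the word ``constant'' in the definition of the continuous symmetry condition. The argument requires that the constant does not depend on $h$, nor on $x$ through $h(x)$. This is the natural reading, by analogy with the discrete definition, where the sum is constant for all $z$; I would state this reading explicitly.

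If the constant were instead allowed to depend on $x$ but not on $h$, the argument would still go through. In that case $\Gamma_h=\mathbb{E}_{x\sim D_x}K(x)$, which remains independent of $h$, provided it is finite. Integrability of $l$ under both $q_x$ and $D$ is also needed, so that the decomposition lemma applies. I would state this as a standing assumption rather than prove it.
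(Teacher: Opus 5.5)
Your proposal is correct and is essentially the paper's argument. The paper gives no separate proof: the corollary follows from the preceding lemma $L_{\overline{D}}(h)=p\Gamma_h+(1-p)L_D(h)$, because the continuous symmetry condition makes $\Gamma_h$ independent of $h$, so $L_{\overline{D}}$ is a positive affine function of $L_D$ (since $1-p>0$) and the two risks have the same minimizers, exactly as in the discrete discussion of Appendix~\ref{uni}. Your remark that an $x$-dependent constant (with finite expectation) would still suffice is accurate.
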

There is also a unique decomposition of any regression loss function as a sum of a $q_x(y)$-symmetric loss function and a label-insensitive term. The proof is almost identical to the proof for classification and is omitted. Consider the simpler case where $q_x(y)=q(y)$ is independent of $x$.
Define $L^{sym}(z,y)$ by the following formula:

\begin{center}
    $\displaystyle L^{sym}(z,y)=L(z,y)-\int q(t) L(z,t) dt$.
\end{center}
Then, $L^{sym}(z,y)$ satisfies the continuous symmetry condition with respect to $q(y)$ and is the symmetric loss function in the unique decomposition of $L$. The closest generalization of the classification case is to take a uniform distribution on a bounded domain $[-I,I]$. We investigate this case in conjunction with the squared error loss in the example below.
\begin{ex}
    Consider the squared error loss $L(z,y)=(z-y)^2$. A direct computation gives a loss equivalent, for optimization over \(z\), to \(-zy\). We could refer to this loss as the regression unhinged loss function. It is unbounded and would suffer from numerical overflows without proper regularization. In the case of linear predictors, we can give an explicit solution. The regularized objective (with $l2$-regularization) is given by 
\begin{center}
    $\frac{1}{N}\sum_{i=1}^N -y_iw\psi(x_i) +\frac{\lambda}{2}||w||_{2}^2$.
\end{center}
Here, $w$ is a row vector and $\psi(x_i)$ is a column vector after applying a feature map $\psi$ to $x_i$.
Finding when the gradient is $0$ leads to:
\begin{center}
    $w=\frac{1}{\lambda N}\sum_{i=1}^N y_i\psi(x_i)^T$.
\end{center}
This means that $w$ depends only on the data centroid (for regression)
\begin{center}
$\mu_S=\frac{1}{ N}\sum_{i=1}^N y_i\psi(x_i)^T$
\end{center}
and a scaling factor $\frac{1}{\lambda}$.
Interestingly, the linear approximation at $0$ of the squared error loss is also the regression unhinged ($-2yz$). 
\end{ex}

\begin{rem}
The data centroid
\[
\mu_S
=
\frac{1}{N}
\sum_{i=1}^N y_i\psi(x_i)^T
\]
is the same basic object that appears in linear Hebbian regression as discussed
by \citet{ofjall2016adaptive}. The point of view taken here is different: the
same centroid arises as the regularized solution associated with the symmetrized
squared loss and hence as a regression analogue of the unhinged loss.
\end{rem}

\begin{ex}
Consider the Huber regression loss with parameter \(\delta>0\),
\[
L_\delta(z,y)
=
\begin{cases}
\frac{1}{2}(z-y)^2, & |z-y|\leq \delta,\\[1mm]
\delta |z-y|-\frac{1}{2}\delta^2, & |z-y|>\delta.
\end{cases}
\]
Let the corruption distribution be the centered Gaussian distribution
\(T\sim \mathcal N(0,\sigma^2)\). The symmetrized Huber loss is
\[
L^{\mathrm{sym}}_\delta(z,y)
=
L_\delta(z,y)
-
\mathbb E_{T\sim \mathcal N(0,\sigma^2)}
L_\delta(z,T).
\]
Unlike the squared error loss, the exact symmetrized Huber loss  objective does not generally admit a simple closed-form solution. However, the linear approximation of \(L^{\mathrm{sym}}_\delta(z,y)\) around \(z=0\) has a simple form. Define the clipping function
\[
\chi_\delta(y)
=
\begin{cases}
-\delta, & y<-\delta,\\
y, & |y|\leq \delta,\\
\delta, & y>\delta.
\end{cases}
\]
We have
\[
\frac{\partial}{\partial z}
L^{\mathrm{sym}}_\delta(z,y)\bigg|_{z=0}
=
-\chi_\delta(y),
\]
where we used the fact that
\(\mathbb E_{T\sim \mathcal N(0,\sigma^2)}\chi_\delta(T)=0\). Therefore, up to
terms independent of \(z\), the linear approximation of the symmetrized Huber
loss at \(0\) is
\[
-\chi_\delta(y)z.
\]
This is a clipped version of the regression unhinged loss.

For linear predictors, the \(l2\)-regularized objective associated with this
linear approximation is
\[
\frac{1}{N}\sum_{i=1}^N
-\chi_\delta(y_i)w\psi(x_i)
+
\frac{\lambda}{2}\|w\|_2^2.
\]
Setting the gradient equal to \(0\) gives
\[
w
=
\frac{1}{\lambda N}
\sum_{i=1}^N
\chi_\delta(y_i)\psi(x_i)^T.
\]
Thus, the linearized symmetrized Huber loss gives a clipped
version of the regression unhinged centroid.
\end{ex}

If $L(z,y)$ is symmetric, the linear approximation at any points $z'$ is also symmetric as can be shown by differentiating both sides of the symmetry condition and multiplying by $z$. We only need to differentiate under the integral for this result to be true, which can be done if we assume that, for example,  $q(y)L(z,y)$ and its derivative with respect to $z$ are continuous in $z$ and $y$. It is also true in the regression case that a symmetric and convex loss function must be affine.

\begin{prop}
Assume that \(L(z,y)\) is twice differentiable in \(z\), that \(L''(z,y)\)
is continuous in \(z\) and \(y\), and that the density \(q(y)\) is strictly
positive on the domain of $y$. If \(L(z,y)\) is symmetric with respect to
\(q\) and convex in \(z\) for every \(y\), then \(L(z,y)\) is affine in \(z\)
for every \(y\).
\end{prop}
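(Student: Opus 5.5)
The plan is to differentiate the symmetry condition twice in $z$ and then use positivity of $q$ together with convexity to force every second derivative to vanish. Symmetry with respect to $q$ means
\[
\int q(t)\,L(z,t)\,dt = K
\]
for all $z$, where $K$ is a constant. Under the stated regularity, continuity of $L''(z,y)$ in $(z,y)$ together with an integrable domination on compact $z$-intervals, we can differentiate under the integral twice. This gives
\[
\int q(t)\,L''(z,t)\,dt = 0 \qquad \text{for all } z.
\]
If the domain of $y$ is unbounded, the domination is an implicit integrability assumption on $q(t)L''(z,t)$. The same assumption was already used in the paper's remark on differentiating the symmetry condition, and I will state it explicitly.

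Next, convexity of $L(\cdot,t)$ for every $t$ gives $L''(z,t)\geq 0$. The integrand $q(t)L''(z,t)$ is therefore nonnegative. It is also continuous in $t$, since $q$ is continuous; if $q$ is only measurable, one works with almost-everywhere statements instead. A nonnegative integrand with zero integral vanishes almost everywhere in $t$. Because $q>0$ on the domain, this yields $L''(z,t)=0$ for almost every $t$. Continuity of $L''$ in $t$ then upgrades this to every $t$ in the domain, provided the domain is an interval or has no isolated points of measure zero. If the domain contains isolated points carrying mass, those points are handled directly: each has positive weight, so $L''$ must vanish there too.

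Finally, for each fixed $y$ we have $L''(z,y)=0$ for all $z$. Integrating twice in $z$ gives $L(z,y)=a(y)z+b(y)$, which is affine in $z$.

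The main obstacle is the second step. We need to justify the interchange of derivative and integral, and to pass from almost-everywhere vanishing to everywhere vanishing. I would handle this by assuming local uniform integrability of $q(t)\sup_{|z|\le M}|L''(z,t)|$ for each $M$, and then invoking continuity of $L''$ in $y$ as the hypothesis intends.
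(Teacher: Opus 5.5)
Your proof is correct and follows the paper's argument: differentiate the symmetry condition twice to get $\int q(y)L''(z,y)\,dy=0$, observe that the integrand is nonnegative by convexity, and use $q>0$ together with continuity of $L''$ to conclude $L''\equiv 0$, so $L$ is affine in $z$. The paper interchanges derivative and integral without comment, so your domination hypothesis simply makes that implicit assumption explicit; if you would rather not add it, run the same positivity argument on the second differences $L(z+h,t)+L(z-h,t)-2L(z,t)$, which are nonnegative by convexity and integrate against $q$ to $K+K-2K=0$, so for each $z$ they vanish for almost every $t$ and all rational $h$, giving $L''(z,t)=0$ almost everywhere with no differentiation under the integral.
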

\noindent\textbf{Proof:} Differentiating under the integral twice with respect to $z$ in the symmetry condition leads to
\begin{center}
$\int q(y) L''(z,y)dy=0$,
\end{center}
where $L''(z,y)$ denotes the second derivative with respect to $z$. Since, $L(z,y)$ is convex for any $y$, we must have $L''(z,y)\geq 0$ for any $z$ and $y$. Since the integral above is $0$, it follows that $L''(z,y)$ is identically $0$. We conclude that $L(z,y)$ is an affine function of $z$ for any $y$. $\blacksquare$

A linear regression loss (from the point of view of $z$) is of the form $f(y)z$ for some function $f$. Such a loss function is symmetric if and only if
\begin{center}
$\displaystyle \mathbb{E}_{y\sim q(y)}f(y)=0$.
\end{center}
The squared-loss example corresponds to \(f(y)=-y\) when the corruption distribution is centered, while the linearized Huber example corresponds to the choice \(f(y)=-\chi_\delta(y)\), for centered and symmetric corruption distributions. Different choices of the function \(f(y)\) lead to different data centroids for regression.

We have shown in this section that the theory of symmetric loss functions can be extended to regression to a large extent with very similar results to classification.

\section{Training configuration and hyperparameters}
\label{conf}
The hyperparameters used in order to obtain our results are given in Table \ref{t3} (SGCE), Table \ref{t6} ($\alpha$-MAE) and Table \ref{t5} (multi-class unhinged). The scheduler \say{steplr} refers to the scheduler torch.optim.lr\_scheduler.StepLR with parameters \say{step\_size} and \say{gamma}. The scheduler \say{cosine} refers to the scheduler torch.optim.lr\_scheduler.CosineAnnealingLR with parameters \say{T\_max} and \say{eta\_min}. The weight decay parameters for the different loss functions are given in Table \ref{wd}. 
\begin{table}[]
\caption{Hyperparameters and training configuration for SGCE.}
\label{t3}
\begin{center}
\begin{tabular}{c|cccc}
                  & \makecell{CIFAR10 \\(Tables \ref{t1},\ref{N})} & \makecell{CIFAR100 \\(Tables \ref{t1},\ref{N})}&\makecell{CIFAR100 \\(Table \ref{t4})}&\makecell{ WebVision \\(Table \ref{t2})}    \\ \hline
q                  & 0.80    & 0.65     & 0.35  &0.25         \\
train batchsize     & 128     & 128      &  128 &32           \\
total epoch          & 120     & 200      &  200 &250          \\
optimizer           & SGD     & SGD      & SGD &SGD+Nesterov \\
learning rate       & 0.01     & 0.1      &  0.1&0.1          \\
momentum            & 0.9     & 0.9      & 0.9 &0.9          \\
weight decay      & 0.005  & 0.0005   & 0.0005 &0.00005      \\
gradient bound      & 5.0     & 5.0      & 5.0&5.0          \\
scheduler        & cosine  & cosine   & steplr &   steplr     \\
T\_max      &    120          &  200       & N/A  &  N/A           \\
eta\_min     &   0.0          &  0.0       & N/A   & N/A      \\
step\_size           & N/A     & N/A      & 190 &240          \\
gamma              & N/A     & N/A      & 0.1 &0.1         
\end{tabular}
\end{center}
\end{table}

\begin{table}[]
\caption{Hyperparameters and training configuration for $\alpha$-MAE.}
\label{t6}
\begin{center}
\begin{tabular}{c|ccc}
                  & \makecell{CIFAR10 \\(Tables \ref{t1},\ref{N})} & \makecell{CIFAR100 \\(Tables \ref{t1},\ref{N})}&\makecell{CIFAR100 \\(Table \ref{t4})}  \\ \hline
$\alpha$                 & 2.0    & 2.0     & 0.25          \\
train batchsize     & 128     & 128      &  128           \\
total epoch          & 120     & 200      &  200          \\
optimizer           & SGD     & SGD      & SGD  \\
learning rate       & 0.01     & 0.1      &  0.1         \\
momentum            & 0.9     & 0.9      & 0.9          \\
weight decay      & 0.005  & 0.0005   & 0.0005      \\
gradient bound      & 5.0     & 5.0      & 5.0         \\
scheduler        & cosine  & cosine   & steplr    \\
T\_max      &    120          &  200       & N/A             \\
eta\_min     &   0.0          &  0.0       & N/A        \\
step\_size           & N/A     & N/A      & 190          \\
gamma              & N/A     & N/A      & 0.1         
\end{tabular}
\end{center}
\end{table}

\begin{table}[]
\caption{Hyperparameters and training configuration for the multi-class unhinged loss function.}
\label{t5}
\begin{center}
\begin{tabular}{c|ccc}
                  & \makecell{CIFAR10 \\(Tables \ref{t1},\ref{N})} & \makecell{CIFAR100 \\(Tables \ref{t1},\ref{N})}&\makecell{CIFAR100 \\(Table \ref{t4})}  \\ \hline

train batchsize     & 128     & 128      &  128           \\
total epoch          & 120     & 200      &  200          \\
optimizer           & SGD     & SGD      & SGD  \\
learning rate       & 0.01     & 0.1      &  0.1         \\
momentum            & 0.9     & 0.9      & 0.9          \\
weight decay      & 0.01  & 0.001   & 0.0005      \\
gradient bound      & 5.0     & 5.0      & 5.0         \\
scheduler        & cosine  & cosine   & steplr    \\
T\_max      &    120          &  200       & N/A             \\
eta\_min     &   0.0          &  0.0       & N/A        \\
step\_size           & N/A     & N/A      & 190          \\
gamma              & N/A     & N/A      & 0.1         
\end{tabular}
\end{center}
\end{table}

\begin{table}[ht]
\centering
\caption{Weight Decay Parameters for Different Loss Functions and Datasets}
\label{wd}
\begin{tabular}{l|ccc}
Loss Function          & CIFAR10 & CIFAR100 (cosine) & CIFAR100 (steplr) \\ \hline
SGCE                   & 0.005   & 0.0005            & 0.0005            \\
$\alpha$-MAE           & 0.005   & 0.0005            & 0.0005            \\
Unhinged               & 0.01    & 0.001             & 0.0005            \\
CE                     & 0.005   & 0.001             & 0.0005            \\
MAE                    & 0.0001  & 5e-5              & None              \\
GCE                    & 0.005   & 0.001             & 0.0001            \\
SCE                    & 0.01    & 0.0005            & None              \\
NCE+RCE                & 0.0001  & 1e-5              & None              \\
NCE+AGCE               & 0.0001  & 1e-5              & None              \\
ANL-CE                 & 0.0     & 0.0               & 1e-5             
\end{tabular}
\end{table}

\end{document}